\documentclass{article}
\usepackage{iclr2025_conference,times}
\pdfoutput=1


\usepackage{amsmath,amsfonts,bm}









\def\eqref#1{equation~\ref{#1}}









\def\1{\bm{1}}










\DeclareMathAlphabet{\mathsfit}{\encodingdefault}{\sfdefault}{m}{sl}
\SetMathAlphabet{\mathsfit}{bold}{\encodingdefault}{\sfdefault}{bx}{n}













\usepackage[utf8]{inputenc} 
\usepackage[T1]{fontenc}    
\usepackage{url}            
\usepackage{booktabs}       
\usepackage{amsfonts}       
\usepackage{nicefrac}       
\usepackage{microtype}      
\usepackage{xcolor}         
\usepackage{inconsolata}

\definecolor{darkblue}{rgb}{0.0,0.0,0.5}
\usepackage[linktocpage=true, colorlinks=true, citecolor=darkblue, 
linkcolor=darkblue, urlcolor=darkblue]{hyperref}

\title{The Foundations of Tokenization:\\Statistical and Computational Concerns}

\iclrfinalcopy 

\newcommand{\ethz}{1}
\newcommand{\cuny}{2}

\author{Juan Luis Gastaldi$^{\ethz}$ \quad
 John Terilla$^{\cuny}$ \\
 \textbf{Luca Malagutti}$^{\ethz}$ \quad
   \textbf{Brian DuSell}$^{\ethz}$ \quad
   \textbf{Tim Vieira}$^{\ethz}$ \quad
  \textbf{Ryan Cotterell}$^{\ethz}$
\\
 $^{\ethz}$ETH Zürich~\;~
 $^{\cuny}$City University of New York\\
 \texttt{\{\href{mailto:juan.luis.gastaldi@inf.ethz.ch}{juan.luis.gastaldi},\href{mailto:lmalagutti@inf.ethz.ch}{lmalagutti},\href{mailto:brian.dusell@inf.ethz.ch}{brian.dusell},\href{mailto:ryan.cotterell@inf.ethz.ch}{ryan.cotterell}\}@inf.ethz.ch
    }\\
 \texttt{\href{mailto:jterilla@gc.cuny.edu}{jterilla@gc.cuny.edu}}~\;~
 \texttt{\href{mailto:tim.f.vieira@gmail.com}{tim.f.vieira@gmail.com}
    }
}

\usepackage{amsfonts}
\usepackage{amsmath}
\usepackage{bbm}
\usepackage{mathdots}
\usepackage{caption}
\usepackage{subcaption}
\usepackage{tikz-cd}
\usepackage{tikz}
\usetikzlibrary{automata, positioning, arrows}
\usetikzlibrary{decorations.pathmorphing} 
\usetikzlibrary{shapes,positioning,fit,calc}
\usepackage{relsize}
\usepackage{xparse}


\usepackage{cleveref}
\usepackage{amssymb}
\usepackage{amsthm}
\usepackage{thm-restate}
\usepackage{xspace}

\DeclareTextSymbolDefault{\ohorn}{T5}
\DeclareTextSymbolDefault{\uhorn}{T5}

\crefname{section}{\S}{\S\S}
\Crefname{section}{\S}{\S\S}
\crefname{table}{Tab.}{}
\crefname{figure}{Fig.}{}
\crefname{algorithm}{Algorithm}{}
\crefname{equation}{eq.}{}
\crefname{appendix}{App.}{}
\crefname{thm}{Theorem}{}
\crefname{prop}{Proposition}{}
\crefname{cor}{Corollary}{}
\crefname{observation}{Observation}{}
\crefname{assumption}{Assumption}{}
\crefformat{section}{\S#2#1#3}

\definecolor{CharacterColor}{RGB}{98,115,19}
\definecolor{TokenColor}{RGB}{140,10,89}


\newtheorem{definition}{Definition}[section]
\newtheorem{example}{Example}[section]

\newcommand{\defn}[1]{\textbf{#1}}
\newcommand{\defeq}{\mathrel{\stackrel{\textnormal{\tiny def}}{=}}}

\newcommand{\afont}[1]{\textcolor{CharacterColor}{\texttt{#1}}}
\newcommand{\vfont}[1]{\textcolor{TokenColor}{\texttt{#1}}}

\newcommand{\mymacro}[1]{\ensuremath{#1}\xspace}

\newcommand{\abstractalphabet}{\mymacro{\Gamma}}
\newcommand{\abstractStrings}{\mymacro{\Gamma^*}}
\newcommand{\abstractstring}{\mymacro{\boldsymbol{\gamma}}}
\newcommand{\abstractone}{\mymacro{\varepsilon}}
\newcommand{\abstractp}{\mymacro{p}}
\newcommand{\abstractpref}{\mymacro{p^{\star}}}
\newcommand{\abstractptest}{\mymacro{\{p_n\}}}
\newcommand{\abstractptestn}{\mymacro{p_n}}
\newcommand{\abstractalphabettwo}{\mymacro{\mathrm{B}}}
\newcommand{\abstractStringstwo}{\mymacro{\mathrm{B}^*}}
\newcommand{\alphabet}{\mymacro {\textcolor{CharacterColor}{\Sigma}}}
\newcommand{\chars}{\mymacro {\textcolor{CharacterColor}{\sigma}}}
\newcommand{\texts}{\mymacro{\textcolor{CharacterColor}{\boldsymbol{\sigma}}}}
\newcommand{\onealpha}{\mymacro{\textcolor{CharacterColor}{\varepsilon_{\alphabet}}}}
\newcommand{\adot}{\mymacro {\textcolor{CharacterColor}{\cdot}}}
\newcommand{\vocabulary}{\mymacro {\textcolor{TokenColor}{\Delta}}}
\newcommand{\token}{\mymacro{\textcolor{TokenColor}{\delta}}}
\newcommand{\onevoc}{\mymacro{\textcolor{TokenColor}{\varepsilon_{\vocabulary}}}}
\newcommand{\vdot}{{\mymacro {\textcolor{TokenColor}{\shortmid}}}}
\newcommand{\tokseq}{\mymacro{\textcolor{TokenColor}{\boldsymbol{\delta}}}}
\newcommand{\tokmodel}{\mymacro{\mathcal{T}}}
\newcommand{\tokenizer}{\mymacro {\textcolor{TokenColor}{\tau}}}
\newcommand{\tokenizerinv}{\mymacro {\textcolor{CharacterColor}{\tau^{-1}}}}
\newcommand{\cotokenizer}{\mymacro {\textcolor{CharacterColor}{\kappa}}}
\newcommand{\cotokenizerinv}{\mymacro {\textcolor{TokenColor}{\kappa^{-1}}}}
\newcommand{\lang}{\mymacro L}
\newcommand{\ptext}{\mymacro{\textcolor{CharacterColor}{p}}}
\newcommand{\ptoken}{\mymacro{\textcolor{TokenColor}{q}}}
\newcommand{\ptextref}{\mymacro{\textcolor{CharacterColor}{p^{\star}}}}
\newcommand{\ptokenref}{\mymacro{\textcolor{TokenColor}{q^{\star}}}}
\newcommand{\ptextest}{\mymacro{\{\textcolor{CharacterColor}{p_n}\}}}
\newcommand{\ptokenest}{\mymacro{\{\textcolor{TokenColor}{q_n}\}}}
\newcommand{\ptokenestn}{\mymacro{\textcolor{TokenColor}{q_n}}}
\newcommand{\kleene}[1]{\ensuremath{#1^*}}
\newcommand{\idalpha}{\mymacro{\textnormal{id}_{\kleene{\alphabet}}}}
\newcommand{\unk}{\mymacro{\texttt{\textsc{unk}}}}
\newcommand{\aunk}{\mymacro{\afont{\textsc{unk}}}}
\newcommand{\vunk}{\mymacro{\vfont{\textsc{unk}}}}

\begin{document}

\maketitle

\begin{abstract}
    Tokenization---the practice of converting strings of characters from an alphabet into sequences of tokens over a vocabulary---is a critical step in the NLP pipeline. The use of token representations is widely credited with increased model performance but is also the source of many undesirable behaviors, such as spurious ambiguity or inconsistency. Despite its recognized importance as a standard representation method in NLP, the theoretical underpinnings of tokenization are not yet fully understood. In particular, the impact of tokenization on language model estimation has been investigated primarily through empirical means. The present paper contributes to addressing this theoretical gap by proposing a unified formal framework for representing and analyzing tokenizer models. Based on the category of stochastic maps, this framework enables us to establish general conditions for a principled use of tokenizers and, most importantly, the necessary and sufficient conditions for a tokenizer model to preserve the consistency of statistical estimators. In addition, we discuss statistical and computational concerns crucial for designing and implementing tokenizer models, such as inconsistency, ambiguity, finiteness, and sequentiality. The framework and results advanced in this paper contribute to building robust theoretical foundations for representations in neural language modeling that can inform future theoretical and empirical research.
\end{abstract}

\section{Introduction}

As a critical step in the natural language processing (NLP) pipeline, tokenization generally refers to the process of breaking up sequences of symbols into subsequences that can be represented as units, known as tokens. The tokenization of linguistic data has long been a common practice in the processing of natural language \cite[cf.][]{palmer2000tokenisation,jurafsky2024speech}. However, the significance of tokenizers took a turn with the emergence of deep neural models for NLP, where the representation of linguistic units plays a renewed fundamental role. With the development and widespread adoption of the \emph{byte-pair encoding} (BPE) algorithm \citep{sennrich-etal-2016-neural}, subword tokenization became the privileged representation method for neural NLP. Adapting an existing compression algorithm \citep{Gage1994} to overcome the challenges of out-of-vocabulary (OOV) terms in the context of neural machine translation, BPE quickly replaced previous heuristic and rule-based tokenizer models such as Morfessor \citep{creutz-lagus-2002-unsupervised} and Moses \citep{koehn-etal-2007-moses}, and was soon followed by other data-driven models, including \emph{WordPiece} \citep[following \citeauthor{schuster2012japanese}, \citeyear{schuster2012japanese}]{Wu2016GooglesNM} and \emph{Unigram} \citep{kudo-2018-subword} among the most widely adopted \cite[cf.][for a survey]{mielke2021between}.\looseness=-1

The importance of subword tokenization for language models (LMs) has grown ever since, and tokenization methods are now built into standard language modeling toolkits, remaining the only major step not fully integrated into widely used end-to-end neural models. Among their recognized benefits, two are often advanced in the literature. Tokenizers offer the ability to train language models over an open vocabulary, circumventing the difficulties associated with OOV terms \citep{sennrich-etal-2016-neural}.  In addition, tokenization is often described as an efficient, lossless \emph{encoding} of the original data \citep{zouhar-etal-2023-tokenization}. Moreover, based on empirical evidence of different kinds, tokenization has been hypothesized to introduce a helpful inductive bias in language modeling \citep{nawrot-etal-2023-efficient,schmidt2024tokenization,uzan2024greed}, although in the current state of the art, this hypothesis remains an open question. At the same time, tokenizers have also been in the spotlight for exhibiting undesirable behaviors that can have a negative impact on LMs. To name just a few, tokenization can be the source of spurious ambiguity \citep{kudo-2018-subword, cao-rimell-2021-evaluate}, generate alignment issues \citep{poesia2022synchromesh,athiwaratkun2024token}, hinder robustness \citep{kudo-2018-subword,xue-etal-2022-byt5}, neglect relevant linguistic features \citep{bostrom-durrett-2020-byte,hofmann-etal-2021-superbizarre,gow-smith-etal-2022-improving,beinborn-pinter-2023-analyzing} or result in inconsistent scoring in the use of LMs in other scientific fields, like psycholinguistics \citep{salazar-etal-2020-masked,kauf-ivanova-2023-better,giulianelli-etal-2024-proper}.\looseness=-1

The prominence of undesirable behaviors induced by tokenization, together with the lack of conclusive theoretical explanations for either their positive or negative effects in language modeling, motivated several recent attempts to dispense with tokenization altogether \citep[\emph{inter alia}]{xue-etal-2022-byt5,clark-etal-2022-canine,wang2024mambabytetokenfreeselectivestate}. However, in the current state of research, the practical benefits of token representations in neural language modeling seem to outweigh their disadvantages, indicating that there is something to be understood rather than discarded in the process of tokenization.

The study of tokenization models has been an active area of research in recent years. Most of the work in this direction has been driven by an empirical perspective \citep[\emph{inter alia}]{ding-etal-2019-call,hou-etal-2023-effects,domingo2023how,fujii-etal-2023-different}. However, some notable exceptions exist where the authors have adopted a more theoretical approach \citep{guo-1997-critical,kudo-2018-subword,zouhar-etal-2023-formal,zouhar-etal-2023-tokenization,berglund2023formalizing,rajaraman2024theory}. The above-cited contributions notwithstanding, this paper contends there is still a need for a more foundational perspective. Among others, such a perspective should provide the means to advance results on the impact of tokenization on language model estimation, which are conspicuously absent from the literature. A foundational approach should also contribute to analyzing known issues in tokenization in a formal way, ultimately informing future theoretical and empirical research and contributing to increasing the reliance on models in situations in which properties such as formal guarantees, verification, or interpretability are as important as performance for an LM.

Accordingly, the objective of the present paper is to take a step forward toward a robust theoretical grounding for neural NLP by laying the foundations of tokenization from a formal perspective. To that end, we characterize the problem of tokenization in current language modeling as arising from the fact that, in practice, starting from an alphabet \alphabet of elementary units, one seeks to estimate a probability distribution over \kleene{\alphabet} indirectly, that is, by estimating a probability distribution over sequences of tokens in \kleene{\vocabulary}, where the set of tokens \vocabulary is, in general, different from \alphabet. Therefore, the problem of tokenization is determined by the forward and backward mappings between \kleene{\alphabet} and \kleene{\vocabulary}. To address this problem, we propose a formal framework based on what we found to be the simplest mathematical tool allowing us to characterize tokenizer models in their full generality, namely the category of stochastic maps. The proposed framework enables us to establish general conditions for a principled use of tokenizers. Crucially, we prove the necessary and sufficient conditions for a tokenizer model to preserve the consistency of statistical estimation of language modeling from data. Additionally, this paper aims to advance the theoretical understanding of existing challenges associated with tokenization, particularly those pertaining to inconsistency, ambiguity, finiteness, and sequentiality. To achieve this, we characterize these known issues through the lens of formal properties of composable maps, such as injectivity, multiplicativity, and bounded variation.

The outline of the paper is as follows. In \Cref{sec:preliminaries}, we present preliminary notions, including elementary aspects of formal language theory and the concept of stochastic maps, extending some existing results to cover the case of countably infinite sets. We also provide notational and terminological remarks. In \Cref{sec:framework}, we propose a unified formal framework for representing and analyzing tokenizer models and establish various results for their use, including the necessary and sufficient conditions for a tokenizer model to preserve the consistency of estimators. Finally, in \Cref{sec:statistical,sec:computational} we discuss, from a formal perspective, statistical and computational concerns relevant to the study, design, and implementation of tokenizer models.

\section{Preliminaries}
\label{sec:preliminaries}

\subsection{Formal Languages, Estimators, and Stochastic Maps}
\label{subsec:preliminaries_1}

An \defn{alphabet} \abstractalphabet is a finite, nonempty set of \defn{symbols}. The set $\abstractalphabet^n$ consists of \defn{strings} of symbols of length $n$. The symbol $\abstractone$ denotes the empty string of length 0. The union $\abstractStrings \defeq \bigcup_{n=0}^\infty \abstractalphabet^{n}$ consists of all finite strings (including $\abstractone$) from the alphabet \abstractalphabet. Similarly, we denote by $\abstractalphabet^{\leq N}$ the set of all strings from \abstractalphabet of length less than or equal to $N$.

String concatenation ($\cdot$) is an associative product $\abstractStrings \times \abstractStrings \overset{\cdot}{\to} \abstractStrings$ for which \abstractone is an identity element. The triple $(\abstractStrings, \cdot, \abstractone)$ defines a \defn{monoid}, which is, in fact, a model of the free monoid on the set \abstractalphabet. A \defn{language} \lang over an alphabet \abstractalphabet is a set of strings $\lang \subseteq \abstractStrings$. A \defn{language model} \abstractp is a probability distribution over \abstractStrings, i.e., \abstractp is a function $\abstractp:\abstractStrings \to [0,1]$ such that $\sum_{\abstractstring\in \abstractStrings} \abstractp(\abstractstring) = 1.$ Language models generalize languages in the sense that the support of a language model, i.e., $\mathrm{supp}\!\left(\abstractp\right) = \{ \abstractstring \mid \abstractp(\abstractstring) \neq 0\}$, is a language. The definition of a language model as a probability distribution on \abstractStrings is deliberately broad. In particular, note that no compatibility between \abstractp and the monoidal structure in \abstractStrings is assumed.\footnote{In addition, one could typically require, for instance, that $\abstractp(\abstractstring\cdot\abstractstring')\leq \min \{\abstractp(\abstractstring),\abstractp(\abstractstring')\}$.}\looseness=-1

In NLP, practitioners generally seek to \defn{estimate} a language model \abstractp from examples of naturally occurring text. Formally, the modeler assumes there exists a true distribution \abstractpref over \abstractStrings, and considers a multiset of naturally occurring texts $\{\abstractstring_m\}_{m=1}^M \subset \abstractStrings$ to be samples from \abstractpref. In its most general form, an estimator of \abstractpref is a sequence \abstractptest  of probability distributions on \abstractStrings such that \abstractptestn becomes closer to \abstractpref as $n$ increases. We call an estimator \defn{consistent} if the sequence \abstractptest converges \emph{pointwise} to \abstractpref.\footnote{Following the usual convention, we may denote convergence as  $\abstractptest \to \abstractpref$, which is not to be confused with the notation for functional types (e.g., $\kleene{\alphabet} \to \kleene{\vocabulary}$). The context of use should prevent any ambiguity.} More precisely, given a probability distribution $\abstractpref \colon \abstractStrings \to [0,1]$ , and a sequence of distributions $\{\abstractptestn \colon \abstractStrings \to [0,1]\}$, we say that \abstractptest is a consistent estimator of \abstractpref if and only if, for all strings $\abstractstring \in \abstractStrings$, the sequence of numbers $\{\abstractptestn(\abstractstring)\}$ converges to the number $\abstractpref(\abstractstring)$.

This notion of consistent estimation is general enough to include many estimation methods, where the $\abstractp_{i}$ (with $1 \leq i \leq n$) can depend on various properties of the sample, such as the size $M$, and may be parameterized by a set of parameters $\theta$. In particular, the relationship between the data $\{\abstractstring_m\}$ and the estimator \abstractptest is determined by the practitioner through the choice of an estimation method, e.g., \defn{maximum likelihood estimation} \citep[MLE;][]{lehmann1998theory}. MLE is often used in language model estimation where it corresponds to determining \abstractptest by minimizing the \defn{cross entropy loss} on the data. As such, MLE amounts to minimizing the \defn{relative entropy}, also called \defn{the Kullback–Leibler divergence}, $D_{\mathrm{KL}}(\abstractpref \parallel \abstractptestn)$ between \abstractpref and \abstractptestn, making \abstractptest a consistent estimator of \abstractpref. Note that this requires a stronger form of convergence than the one we use in our framework: if $D_{\mathrm{KL}}(\abstractpref \parallel \abstractptestn) \to 0$ then $\abstractptestn \to \abstractpref$ pointwise (a consequence of Pinsker's lemma). By adopting a weak kind of convergence, our definition is, therefore, compatible with a wide variety of convergence measures while remaining relatively easy to check for.

Our definition of tokenizer models will require the use of a special kind of map between sets called a stochastic map.  The reference \citet{baez2014bayesian} contains a detailed introduction to the category of finite sets with stochastic maps between them.  Here, we will extend some of the results in \citet{baez2014bayesian} to cover the case of countably infinite sets.  We assume all sets are countable, either finite or countably infinite.  A \defn{stochastic map} from a set $X$ to a set $Y$ is a function from $X$ to the set of probability distributions on $Y$.  We use
\[X \rightsquigarrow Y\]
to denote a stochastic map from $X$ to $Y$ and the notation $x\mapsto f(y \mid x)$ to denote the probability of $y\in Y$ in the distribution assigned to $x\in X$. In other words, a stochastic map $f \colon X \rightsquigarrow Y$ is a function \looseness=-1
\begin{align*}
  X\times Y &\to [0,1]\\
  (x,y) &\mapsto f(y \mid x)
\end{align*}
satisfying $\sum_{y\in Y}f(y \mid x) = 1$ for all $x \in X$. The notation $f(y \mid x)$ is evocative of the conditional probability of $y$ given $x$, but it is more accurate to think of \emph{indexing} by $x$ rather than \emph{conditioning} on $x$ because there is no assumption that the numbers $f(y \mid x)$ can be assembled into a joint distribution on $X\times Y$.

Significantly, stochastic maps can be composed.  The composition
\[
\begin{tikzcd}
    X \arrow[r, rightsquigarrow, "f"] \arrow[rr, rightsquigarrow, bend right, "gf"']  
    & Y\arrow[r, rightsquigarrow,"g"] 
    & Z
\end{tikzcd}
\]
$gf \colon X \rightsquigarrow Z$ is defined by

\begin{equation}\label{eq:composition}
  gf(z \mid x)=\sum_{y\in Y}g(z \mid y)f(y \mid x).
\end{equation}

Since the sum in \Cref{eq:composition} contains infinitely many summands, it requires to check that the formula for $gf(z \mid x)$ is finite, and that for each $x\in X$, $gf(\cdot \mid x)$ defines a probability distribution on $Z$, both of which follow from the fact that:
\[
\sum_{z\in Z}gf(z\mid x)=\sum_{z\in Z}\sum_{y\in Y}g(z \mid y)f(y \mid x) = \sum_{y\in Y}\sum_{z\in Z} g(z \mid y)f(y \mid x)
= \sum_{y\in Y} f(y \mid x) = 1. 
\]

If one arranges a stochastic map into an $|X|\times |Y|$ matrix with the $f(y\mid x)$ entry in the $x,y$ position, then every entry is nonnegative and the sum of every row is $1$. The computation above shows that composition of stochastic maps is realized by matrix multiplication, and that---even when the matrices are infinite---the row-column dot products are finite, and the result of matrix multiplication is again a matrix with nonnegative entries whose rows sum to $1$. This view makes it clear that composition of stochastic maps is associative.

Stochastic maps generalize both ordinary probability distributions and functions. A probability distribution over a set $X$ can be represented as a stochastic map into $X$ from a 1-element set, denoted as $\mathbf{1}\defeq\{1\}$, i.e., $\abstractp \colon \mathbf{1} \rightsquigarrow X$. In such cases, the customary notation $\abstractp(x)$ can be used without risk of ambiguity as a shorthand for the more cumbersome $\abstractp(x \mid 1)$. An ordinary function $f \colon X \to Y$ can be regarded as a stochastic map $X \rightsquigarrow Y$ by mapping $x$ to the probability distribution on $Y$ concentrated on the singleton $\{f(x)\}$, in which case we say the stochastic map $f$ is \defn{deterministic}. For simplicity, when a stochastic map $f \colon X\rightsquigarrow Y$ is deterministic, writing $y=f(x)$ means that $f(y \mid x) = 1$ and $f(y'\mid x) =0$ for $y'\neq y$. Composition generalizes both the composition of functions and the pushforward of a probability function via a function.  If $\abstractp \colon \mathbf{1}\rightsquigarrow X$ is a probability distribution on $X$ and $f \colon X\to Y$ is a deterministic function, then the composition $\begin{tikzcd} \mathbf{1} \arrow[r, rightsquigarrow, "\abstractp"] & X\arrow[r, rightsquigarrow,"f"] & Y \end{tikzcd}$ is a stochastic map $f\abstractp \colon \mathbf{1} \rightsquigarrow Y$, which is a probability distribution on $Y$ whose formula is $f\abstractp(y)=\sum_{x\in X}f(y \mid x)\abstractp(x \mid 1)=\sum_{x\in f^{-1}(y)}\abstractp(x)$.  That is, $f\abstractp$ is just the pushforward of the probability distribution \abstractp via the function $f$.

For any set $X$, the identity function on $X$ behaves as an identity for stochastic maps.  That is $\textnormal{id}_X \colon X \rightsquigarrow X$ is the stochastic map defined by $\textnormal{id}_X(x' \mid x)=1$ when $x'=x$ and $\textnormal{id}_X(x' \mid x)=0$ when $x'\neq x$. In matrix representation, $\textnormal{id}_X$ is the identity matrix, and satisfies $f\textnormal{id}_X=f=\textnormal{id}_Y f$ for all stochastic maps $f \colon X \rightsquigarrow Y$. Stochastic maps also come equipped with natural notions of injectivity and surjectivity. The \defn{support} of the stochastic map $f$ is the union of the support of the distributions $f( \cdot \mid x)$ as $x$ ranges over $X$. A stochastic map $f \colon X\rightsquigarrow Y$ is \defn{injective} iff the supports of $f(\cdot \mid x)$ and $f(\cdot \mid x')$ are disjoint whenever $x \neq x'$. A stochastic map $f \colon X\rightsquigarrow Y$ is \defn{surjective} iff, for all $y \in Y$, there exists $x \in X$ such that $f(y \mid x)\neq 0$. Injectivity and surjectivity for stochastic maps reduce to their ordinary definitions in the case of deterministic maps.\looseness=-1

\subsection{Notation and Terminology}

We adopt the following notational conventions. We denote the \defn{length} $n$ of a string $\abstractstring \in \abstractalphabet^n$ as $|\abstractstring|$. The expression $\abstractstring'\preceq\abstractstring$ denotes the fact that $\abstractstring=\abstractstring'\cdot\abstractstring''$ for $\abstractstring,\abstractstring',\abstractstring'' \in \abstractStrings$, that is, $\abstractstring'$ is a \defn{prefix} of \abstractstring. Alphabets will be denoted by uppercase Greek letters (e.g., \abstractalphabet, \abstractalphabettwo). In the context of tokenization, we will be interested in maps between strings of languages over two different alphabets, which we will denote \alphabet and \vocabulary. For a more intuitive presentation that avoids ambiguity, we reserve the term \defn{alphabet} for the former and call the latter \defn{vocabulary}, systematically using the \textcolor{CharacterColor}{green} and \textcolor{TokenColor}{violet} colors to highlight the respective relation to either of these sets. We denote symbols by lowercase Greek letters, e.g., $\chars \in \alphabet$ and $\token \in \vocabulary$, calling them \defn{characters} in the first case and \defn{tokens} in the second. Strings will be denoted by bold lowercase Greek letters, e.g., $\texts\in \kleene{\alphabet}$ and $\tokseq \in \kleene{\vocabulary}$, reserving the name character strings or \defn{texts} for the former and token strings or \defn{token sequences} for the latter. The reader should keep in mind these terminological distinctions are for expository purposes only. From the formal perspective advanced in this paper, we do not assume any inherent privilege of \alphabet over \vocabulary, focusing instead on how their respective elements can be mapped into each other. Maps are denoted using the color of their codomains.

When necessary, we will distinguish the empty character string $\onealpha \in \kleene{\alphabet}$ from the empty token sequence $\onevoc \in \kleene{\vocabulary}$. Examples of strings and tokens will be written in monospace font (e.g., \afont{t}, \vfont{the}). There are cases where $\vocabulary \cap \kleene{\alphabet} \neq \emptyset$, and it will be necessary to distinguish between concatenation in \kleene{\alphabet} and \kleene{\vocabulary}.  In \kleene{\vocabulary}, concatenation will be denoted as \vdot.  So, for example, if $\alphabet = \{\afont{t}, \afont{h}, \afont{e}\}$ and $\vocabulary = \{\vfont{th}, \vfont{he}, \vfont{e}\}$, the expression $\afont{t}\adot\afont{h} \adot\afont{e}$ denotes the concatenation in \kleene{\alphabet} of the three characters \afont{t}, \afont{h}, and \afont{e}, while the expression $\vfont{t}\vdot\vfont{he}$ represents the concatenation in \kleene{\vocabulary} of the two tokens \vfont{t} and \vfont{he}.  The cases when $\vocabulary \cap \kleene{\alphabet} \neq \emptyset$ are of sufficient significance that we shall generally avoid using the simple juxtaposition of characters to express concatenation.  Therefore, the reader should always interpret \vfont{th} as a token in \vocabulary, and not a text in \kleene{\alphabet} (written $\afont{t}\adot\afont{h}$). If further notational clarification is needed, square brackets may be used to represent the concatenation of two texts in \kleene{\alphabet} (and likewise for \kleene{\vocabulary}). For example, $[\afont{t}\adot\afont{h}]\adot\afont{e}$ denotes the concatenation of the text $\afont{t}\adot\afont{h}$ with the character \afont{e} in \kleene{\alphabet}. Should any ambiguity arise between specific characters and tokens (e.g., $\afont{t} \in \alphabet$ vs. $\vfont{t} \in \vocabulary$), it will be explicitly disambiguated whenever there is a risk that context alone is insufficient.

\section{A Formal Framework for Tokenization}
\label{sec:framework}

As observed in the previous pages, in modern NLP, the problem of tokenization arises from the fact that one seeks to estimate a model \ptextref over strings of symbols in one alphabet \emph{indirectly}, that is, by estimating a probability distribution \ptoken over strings of symbols on a different alphabet. Therefore, from a strictly formal perspective, the problem of tokenization can be characterized as that of the respective mappings between two sets of strings, conventionally referred to as the set \kleene{\alphabet} of character strings and the set \kleene{\vocabulary} of token sequences. In order to estimate \ptextref through \ptoken, \kleene{\alphabet} needs to be mapped into and from \kleene{\vocabulary}. The connection between \kleene{\alphabet} and \kleene{\vocabulary} is thus made through a pair of mappings $(\tokenizer, \cotokenizer)$ that constitutes the basis of our formal characterization of tokenization. Accordingly, in its most general form, a tokenizer can be defined as follows:

\begin{definition}\label{df:tokenizer}
    A \defn{tokenizer model} (or simply \defn{tokenizer}) from \kleene{\alphabet} to \kleene{\vocabulary} is a pair of stochastic maps $\tokmodel = (\tokenizer, \cotokenizer)$, respectively called the \defn{encoder} and the \defn{decoder}, where the encoder is a stochastic map $\tokenizer \colon \kleene{\alphabet} \rightsquigarrow \kleene{\vocabulary}$, and the decoder is a stochastic map $\cotokenizer \colon \kleene{\vocabulary} \rightsquigarrow \kleene{\alphabet}$.
\end{definition}

\Cref{df:tokenizer} is deliberately broad, covering \emph{any} pair of string-to-string mappings \tokenizer and \cotokenizer. Other than the fact that the domain of each mapping constitutes the codomain of the other, we define the encoder and decoder as arbitrary stochastic maps. In other words, we will be regarding \tokenizer and \cotokenizer primarily from the point of view of their \emph{composition}. In particular, we do not require any specific connection between the alphabet \alphabet and the vocabulary \vocabulary, and hence the use of the terms encoder and decoder is also strictly conventional. However, the distinction is motivated by an implicit assumption behind the established use of tokenizers in language models---namely, that the samples $\{\texts_m\}_{m=1}^M \subset \kleene{\alphabet}$ of naturally occurring texts used for estimation can be mapped into \kleene{\vocabulary} in such a way that the estimated model \ptoken can be, in turn, transformed into a model \ptext over \kleene{\alphabet} through the map \cotokenizer, such that $\ptext =\cotokenizer \ptoken$ can be considered as an estimate of the original distribution \ptextref. When \ptext is given in the form of $\cotokenizer \ptoken$, we say that \ptext is a \defn{tokenized language model}.

Despite the potential empirical increase in the predictive performance of a model resulting from specific tokenization choices, the soundness of such a procedure is not guaranteed for arbitrary \tokenizer and \cotokenizer without further conditions. On the one hand, the notion of estimation in \kleene{\vocabulary} is not well defined unless there exists a reference distribution \ptokenref  over \kleene{\vocabulary} to which the estimator \ptokenest can converge. On the other, assuming such an estimator is consistent, transforming it into a consistent estimator of \ptextref requires a way to map the sequence \ptokenest into a sequence \ptextest that converges to \ptextref.

Assuming a reference distribution \ptextref exists on \kleene{\alphabet}, one obtains a reference \ptokenref on \kleene{\vocabulary} simply through the composition (\Cref{eq:composition}) with the encoder: $\ptokenref=\tokenizer \ptextref$.  In other words, the following diagram of stochastic maps commutes
\begin{center}
  \begin{tikzcd}[/tikz/ampersand replacement=\&]
      \&
       {\mathbf{1}}
       \arrow[ld, rightsquigarrow,"\ptextref", swap]
       \arrow[rd, rightsquigarrow,"\ptokenref"]
      \&
      \\
      \kleene{\alphabet}
      \arrow[rr, rightsquigarrow,"\tokenizer", swap]
      \& \& 
      \kleene{\vocabulary}
  \end{tikzcd}
\end{center}
The distribution \ptokenref is just the \defn{pushforward} of the measure \ptextref along \tokenizer, which then makes the encoder \tokenizer a \emph{measure-preserving map} between $(\kleene{\alphabet},\ptextref)$ and $(\kleene{\vocabulary},\ptokenref)$.

In the same way, \ptextest can be obtained by mapping the sequence \ptokenest through \cotokenizer.  Defining $\ptext_i  = \cotokenizer \ptoken_i$, we obtain the following commutative diagram\looseness=-1
\begin{center}
  \begin{tikzcd}[/tikz/ampersand replacement=\&]
      \&
       {\mathbb{N}}
       \arrow[ld, rightsquigarrow,"\ptokenest", swap]
       \arrow[rd, rightsquigarrow,"\ptextest"]
      \&
      \\
      \kleene{\vocabulary}
      \arrow[rr, rightsquigarrow, "\cotokenizer", swap]
      \& \& 
      \kleene{\alphabet}
  \end{tikzcd}
\end{center}

So far, none of these requirements imposes conditions on \tokenizer and \cotokenizer other than being well-defined mappings between their respective domains and codomains. Notably, the notion of estimation of $\tokenizer\ptextref$ is well defined for arbitrary \tokenizer. However, given a consistent estimator \ptokenest of \ptokenref, $\{\cotokenizer \ptokenestn\}$\emph{ is not guaranteed to converge to} \ptextref without further conditions on \cotokenizer. To establish such conditions, we will need the following lemmas.\footnote{The proofs for all formal results (theorems, propositions, lemmas) have been placed in the Appendix (\Cref{sec:appendix}).}

\begin{restatable}{lemma}{pointwiselim}
  \label{lem:pointwise_lim_0}
  Let \abstractptest be a sequence of probability distributions over a countable set $X$ that converges pointwise to a probability distribution \abstractp. Then $\lim_{n\to \infty}\sum_{x\in X} |\abstractptestn(x)-\abstractp(x)| = 0$.
\end{restatable}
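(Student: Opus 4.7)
The plan is to apply Scheffé's lemma: decompose the pointwise difference into positive and negative parts, use the fact that both distributions sum to $1$ to equate the sums of the two parts, and finally invoke the dominated convergence theorem for the counting measure on $X$. The hypothesis that $X$ is countable is essential so that ``dominated convergence'' reduces to the familiar exchange of limit and sum under a summable envelope.

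First, for every $x \in X$ write
$$|\abstractptestn(x) - \abstractp(x)| = (\abstractptestn(x) - \abstractp(x))^{+} + (\abstractp(x) - \abstractptestn(x))^{+},$$
where $(a)^{+} := \max\{a, 0\}$. Next, since both $\abstractptestn$ and \abstractp are probability distributions on $X$, the series $\sum_{x \in X} \abstractptestn(x)$ and $\sum_{x \in X} \abstractp(x)$ are absolutely convergent and equal to $1$, so $\sum_{x \in X}(\abstractptestn(x) - \abstractp(x)) = 0$. Rearranging (justified by absolute convergence) this yields
$$\sum_{x \in X}(\abstractptestn(x) - \abstractp(x))^{+} \;=\; \sum_{x \in X}(\abstractp(x) - \abstractptestn(x))^{+},$$
so that $\sum_{x \in X}|\abstractptestn(x) - \abstractp(x)| = 2\sum_{x \in X}(\abstractp(x) - \abstractptestn(x))^{+}$. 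It therefore suffices to prove that the right-hand side tends to zero.

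To that end, set $g_n(x) := (\abstractp(x) - \abstractptestn(x))^{+}$. Because $\abstractptestn(x) \geq 0$, we have the crucial pointwise envelope $0 \leq g_n(x) \leq \abstractp(x)$, with $\abstractp$ summable ($\sum_x \abstractp(x) = 1$). By the pointwise convergence hypothesis, $g_n(x) \to 0$ for every $x \in X$. Applying the dominated convergence theorem with respect to the counting measure on $X$ (or, equivalently, an elementary ``tail'' argument splitting $X$ into a large finite set where uniform smallness follows from pointwise convergence plus a tail that is small by summability of \abstractp) gives $\sum_x g_n(x) \to 0$, and hence the claim.

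The only genuine subtlety lies in the countably infinite case: naive termwise manipulation of $\sum_x |\abstractptestn(x) - \abstractp(x)|$ does not allow swapping limit and sum, since pointwise convergence is a weak hypothesis. The main obstacle is therefore producing a summable dominating function; the observation that \abstractp itself dominates $g_n$ is the key trick that unlocks dominated convergence. The rearrangements of series used to equate the positive and negative parts are routine under absolute convergence.
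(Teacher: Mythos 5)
Your proof is correct, but it follows a different (equally standard) route to Scheff\'e's lemma than the paper does. You decompose $|\abstractptestn(x)-\abstractp(x)|$ into its positive and negative parts, use $\sum_x \abstractptestn(x) = \sum_x \abstractp(x) = 1$ (with the rearrangement legitimized by $\sum_x |\abstractptestn(x)-\abstractp(x)| \le 2$) to reduce the claim to $\sum_x (\abstractp(x)-\abstractptestn(x))^{+} \to 0$, and then invoke dominated convergence with the envelope $0 \le (\abstractp-\abstractptestn)^{+} \le \abstractp$. The paper instead applies Fatou's lemma once, directly to the nonnegative sequence $f_n := \abstractptestn + \abstractp - |\abstractptestn - \abstractp|$, obtaining $2 \le 2 - \limsup_n \sum_x |\abstractptestn(x)-\abstractp(x)|$ and hence the result in a single stroke. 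The two arguments are close cousins---unwinding the usual proof of dominated convergence via Fatou applied to $\abstractp - g_n$ essentially recovers the paper's computation---but they are packaged differently. Your version makes the dominating function explicit and admits the elementary finite-set-plus-tail argument you sketch, which avoids citing any measure-theoretic machinery at all; the paper's version is shorter and needs only Fatou, at the cost of the slightly less motivated choice of $f_n$. Both are complete and correct.
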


\begin{restatable}{corollary}{pointwiseuniform}
  \label{cor:pointwise_is_uniform}
  Let \abstractptest be a sequence of probability distributions over a countable set $X$ that converges pointwise to a probability distribution \abstractp. Then $\abstractptest\to \abstractp$ uniformly.
\end{restatable}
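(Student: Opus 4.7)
The plan is to derive the corollary as an essentially immediate consequence of \Cref{lem:pointwise_lim_0}. Uniform convergence of \abstractptest to \abstractp on $X$ means that $\sup_{x\in X}|\abstractptestn(x)-\abstractp(x)| \to 0$ as $n \to \infty$, so the task reduces to bounding this supremum by a quantity already controlled by the lemma.

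The key observation is the trivial pointwise inequality: for every $x\in X$ and every $n$,
\[
|\abstractptestn(x)-\abstractp(x)| \;\leq\; \sum_{y\in X}|\abstractptestn(y)-\abstractp(y)|,
\]
since all summands on the right are nonnegative and the term indexed by $y=x$ already appears. Taking the supremum over $x\in X$ on the left preserves the inequality (the right-hand side does not depend on $x$), giving
\[
\sup_{x\in X}|\abstractptestn(x)-\abstractp(x)| \;\leq\; \sum_{y\in X}|\abstractptestn(y)-\abstractp(y)|.
\]

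Finally I would apply \Cref{lem:pointwise_lim_0}, which asserts that the right-hand side tends to $0$ as $n\to\infty$, and conclude by the squeeze theorem (together with nonnegativity of the supremum) that $\sup_{x\in X}|\abstractptestn(x)-\abstractp(x)|\to 0$, i.e., $\abstractptest \to \abstractp$ uniformly. There is no real obstacle here, since all of the work has been absorbed into \Cref{lem:pointwise_lim_0}; the only thing to note is that countability of $X$ is what allows the total variation sum to be written and used in this form, but this is inherited from the hypotheses of the lemma.
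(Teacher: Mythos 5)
Your proposal is correct and is essentially identical to the paper's own proof, which likewise bounds each term $|\abstractptestn(x)-\abstractp(x)|$ by the full nonnegative sum $\sum_{y\in X}|\abstractptestn(y)-\abstractp(y)|$ and invokes \Cref{lem:pointwise_lim_0} to conclude uniform convergence. No gaps.
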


\begin{restatable}{lemma}{limitmap}
\label{lem:limit_map}
  Let $f$ be a stochastic map from $X$ to $Y$, and \abstractptest be an estimator for a probability distribution \abstractp on $X$. Then $f\abstractptestn$ is an estimator for the probability distribution $f\abstractp$ on $Y$.
\end{restatable}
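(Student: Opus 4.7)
The plan is to show pointwise convergence $f\abstractptestn(y) \to f\abstractp(y)$ for every $y \in Y$, by reducing the difference $|f\abstractptestn(y) - f\abstractp(y)|$ to the $\ell^1$ distance $\sum_{x \in X}|\abstractptestn(x) - \abstractp(x)|$ and invoking \Cref{lem:pointwise_lim_0}. This is a direct application of the preceding lemma, so the work amounts to chaining a triangle inequality with the bound $f(y\mid x) \leq 1$.

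First I would unpack the composition via \Cref{eq:composition}: for each $y \in Y$,
\[
 f\abstractptestn(y) - f\abstractp(y) = \sum_{x \in X} f(y\mid x)\bigl(\abstractptestn(x) - \abstractp(x)\bigr).
\]
Applying the triangle inequality (which is valid here because the series converges absolutely, since $f(y\mid x)$ and the $\abstractptestn(x), \abstractp(x)$ are all in $[0,1]$ and the $\abstractptestn, \abstractp$ are summable) and using $f(y\mid x) \leq 1$ gives
\[
 |f\abstractptestn(y) - f\abstractp(y)| \leq \sum_{x \in X} f(y\mid x)\,|\abstractptestn(x) - \abstractp(x)| \leq \sum_{x \in X} |\abstractptestn(x) - \abstractp(x)|.
\]
The right-hand side is independent of $y$ and tends to $0$ as $n \to \infty$ by \Cref{lem:pointwise_lim_0}, so $f\abstractptestn(y) \to f\abstractp(y)$ for every $y$, which is precisely the definition of $\{f\abstractptestn\}$ being an estimator for $f\abstractp$.

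The only subtlety, which I would briefly address, is ensuring that the interchange of limit and sum in the bound is justified when $X$ is countably infinite; this is exactly the content of \Cref{lem:pointwise_lim_0}, which upgrades pointwise convergence of distributions on a countable set to $\ell^1$ convergence. There is no real obstacle beyond this; in fact the proof yields the stronger conclusion that $f\abstractptestn \to f\abstractp$ uniformly on $Y$, recovering \Cref{cor:pointwise_is_uniform} in this pushforward setting as a by-product.
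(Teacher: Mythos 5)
Your proposal is correct and follows essentially the same route as the paper's own proof: it expands the composition $f\abstractptestn(y)-f\abstractp(y)$, applies the triangle inequality with the bound $f(y\mid x)\leq 1$, and invokes \Cref{lem:pointwise_lim_0} to conclude that the dominating $\ell^1$ distance vanishes. The observation that the bound is uniform in $y$ is a harmless bonus already captured by \Cref{cor:pointwise_is_uniform}.
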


In other words, \Cref{lem:limit_map} says that stochastic maps preserve the consistency of estimators. Armed with \Cref{lem:limit_map}, it is now easy to establish a simple but \emph{fundamental} principle for the use of tokenization models in language modeling.

\begin{restatable}[Fundamental Principle of Tokenization]{theorem}
{fundamentalprinciple}\label{th:fundamental_principle}
    Given a reference probability distribution \ptextref over \kleene{\alphabet}, a tokenizer $\tokmodel = (\tokenizer, \cotokenizer)$ from \kleene{\alphabet} to \kleene{\vocabulary}, and a consistent estimator \ptokenest of the image reference distribution $\ptokenref = \tokenizer \ptextref$, the sequence $\{\cotokenizer \ptokenestn\}$ is a consistent estimator of \ptextref if and only if $\cotokenizer \tokenizer \ptextref = \ptextref $.
\end{restatable}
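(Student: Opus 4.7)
The plan is to treat the theorem as essentially a direct consequence of \Cref{lem:limit_map}, since that lemma already does the heavy lifting by showing that stochastic maps preserve the consistency of estimators. What remains is a two-line argument in each direction, together with an appeal to uniqueness of pointwise limits.

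For the backward (sufficiency) direction I would start from the hypothesis that \ptokenest is a consistent estimator of $\ptokenref = \tokenizer \ptextref$, apply \Cref{lem:limit_map} to the stochastic map \cotokenizer, and conclude that $\{\cotokenizer \ptokenestn\}$ is a consistent estimator of $\cotokenizer \ptokenref = \cotokenizer\tokenizer\ptextref$. Under the assumption $\cotokenizer\tokenizer\ptextref = \ptextref$, this estimator converges pointwise to \ptextref, which is exactly the claim. Note that the identification $\cotokenizer\ptokenref = \cotokenizer\tokenizer\ptextref$ uses associativity of composition of stochastic maps, which was established in the preliminaries via the matrix-multiplication picture.

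For the forward (necessity) direction I would again invoke \Cref{lem:limit_map} to deduce that $\{\cotokenizer \ptokenestn\}$ converges pointwise to $\cotokenizer\ptokenref = \cotokenizer\tokenizer\ptextref$. By hypothesis, the same sequence also converges pointwise to \ptextref. Since pointwise limits of real-valued sequences are unique, evaluating both limits at each $\texts \in \kleene{\alphabet}$ yields $\cotokenizer\tokenizer\ptextref(\texts) = \ptextref(\texts)$, i.e., the equality of distributions $\cotokenizer\tokenizer\ptextref = \ptextref$.

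There is no real obstacle here beyond what has already been absorbed into \Cref{lem:limit_map}; the only subtlety worth flagging explicitly is that the notion of consistency being used is pointwise convergence, so uniqueness of limits is applied one string at a time, which is valid precisely because \kleene{\alphabet} is countable and the target space $[0,1]$ is Hausdorff. If additional rigor is desired, one can remark that \Cref{cor:pointwise_is_uniform} is not needed for this theorem, only \Cref{lem:limit_map}, keeping the proof as a clean plug-in of the lemma in both directions.
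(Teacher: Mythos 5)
Your proposal is correct and matches the paper's own proof essentially verbatim: both directions are obtained by applying \Cref{lem:limit_map} to push the convergence $\ptokenest \to \ptokenref = \tokenizer\ptextref$ through \cotokenizer, and the necessity direction is closed by uniqueness of (pointwise) limits. Your extra remarks about associativity and not needing \Cref{cor:pointwise_is_uniform} are accurate but do not change the argument.
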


\Cref{th:fundamental_principle} characterizes precisely when a consistent estimator $\ptokenest$ of $\ptokenref$ yields a consistent estimator $\ptextest$ of $\ptextref$ after decoding. Based on the fundamental principle expressed in \Cref{th:fundamental_principle}, we propose the following definitions:

\begin{definition}\label{df:consistent}
    Given a probability distribution \ptext over \kleene{\alphabet}, a tokenizer $\tokmodel = (\tokenizer, \cotokenizer)$ from \kleene{\alphabet} to \kleene{\vocabulary} is \defn{consistent with respect to} \ptext if we have $\cotokenizer \tokenizer \ptext = \ptext $.
\end{definition}

\begin{definition}\label{df:exact}
    Let \ptext be a probability distribution over \kleene{\alphabet} and $\tokmodel = (\tokenizer, \cotokenizer)$ a tokenizer from \kleene{\alphabet} to \kleene{\vocabulary}. When $\cotokenizer \tokenizer = \idalpha $, we say that \tokmodel is \defn{exact}.
\end{definition}

Notice that exact tokenizers are consistent, but a tokenizer that is consistent with respect to a distribution \ptext is not necessarily exact. Take, for instance, a probability distribution \ptext over some set $X$ and $x',x'' \in X$ such that $\ptext(x') = \ptext(x'') = c$.  Then one can fashion a tokenizer for which $\cotokenizer\tokenizer(x)=x$ for all $x$ except $\cotokenizer\tokenizer(x')=x''$ and $\cotokenizer\tokenizer(x'')=x'$.  Such a tokenizer is consistent with respect to \ptext without being exact.  Consistency with respect to all distributions, however, is the same as being exact.

\begin{restatable}{proposition}{consistentexact}
\label{prop:exact}
  A tokenizer $\tokmodel = (\tokenizer, \cotokenizer)$ from \kleene{\alphabet} to \kleene{\vocabulary} is exact if and only if it is consistent with respect to every probability distribution over \kleene{\alphabet}.  
\end{restatable}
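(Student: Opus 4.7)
The plan is to prove both implications separately. The forward direction ($\Rightarrow$) is essentially by definition of the identity stochastic map: if $\cotokenizer\tokenizer = \idalpha$, then for any probability distribution $\ptext$ over $\kleene{\alphabet}$, viewed as a stochastic map $\ptext\colon\mathbf{1}\rightsquigarrow\kleene{\alphabet}$, associativity of stochastic map composition together with the identity property $\idalpha\ptext = \ptext$ established in the preliminaries yields $\cotokenizer\tokenizer\ptext = (\cotokenizer\tokenizer)\ptext = \idalpha\ptext = \ptext$, so $\tokmodel$ is consistent with respect to $\ptext$.

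For the reverse direction ($\Leftarrow$), the idea is to test the consistency hypothesis against a family of distributions rich enough to pin down every entry of the stochastic matrix representing $\cotokenizer\tokenizer$; the point-mass distributions are the natural choice. Concretely, for each $\texts\in\kleene{\alphabet}$, I would define the Dirac distribution $\delta_\texts\colon\mathbf{1}\rightsquigarrow\kleene{\alphabet}$ by $\delta_\texts(\texts'\mid 1) = 1$ if $\texts' = \texts$ and $0$ otherwise. This is a valid probability distribution on the countable set $\kleene{\alphabet}$, so the hypothesis gives $\cotokenizer\tokenizer\delta_\texts = \delta_\texts$. Unpacking both sides using the composition formula in \Cref{eq:composition}, for every $\texts'\in\kleene{\alphabet}$ I would compute
\[
(\cotokenizer\tokenizer\delta_\texts)(\texts'\mid 1) = \sum_{\texts''\in\kleene{\alphabet}}(\cotokenizer\tokenizer)(\texts'\mid\texts'')\,\delta_\texts(\texts''\mid 1) = (\cotokenizer\tokenizer)(\texts'\mid\texts),
\]
and equating this with $\delta_\texts(\texts'\mid 1) = \idalpha(\texts'\mid\texts)$ forces $(\cotokenizer\tokenizer)(\texts'\mid\texts) = \idalpha(\texts'\mid\texts)$. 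Letting $\texts$ and $\texts'$ range independently over $\kleene{\alphabet}$ establishes $\cotokenizer\tokenizer = \idalpha$, i.e., $\tokmodel$ is exact.

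There is no substantive obstacle here: the argument reduces to the observation that a stochastic map on a countable set is determined by its action on the Dirac masses, which is precisely what the $x$-indexed rows of the matrix representation encode. The only point worth a sentence of care is verifying that $\delta_\texts$ is a legitimate distribution in the countably-infinite setting of $\kleene{\alphabet}$ (which is immediate) and that precomposition with $\delta_\texts\colon\mathbf{1}\rightsquigarrow\kleene{\alphabet}$ recovers exactly the $\texts$-row of $\cotokenizer\tokenizer$, so that equality of distributions $\cotokenizer\tokenizer\delta_\texts = \delta_\texts$ is equivalent to entrywise equality in that row.
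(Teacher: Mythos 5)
Your proof is correct and takes essentially the same route as the paper's: the forward direction is immediate from the definition of exactness, and the reverse direction tests the consistency hypothesis against point-mass distributions to recover each row of the stochastic matrix of $\cotokenizer\tokenizer$. If anything, your entrywise unpacking via the composition formula is slightly more careful than the paper's, which abbreviates the pushforward as $p_{\texts}(\cotokenizer\tokenizer(\texts))$ as though $\cotokenizer\tokenizer$ were already known to be deterministic, but the underlying argument is the same.
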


Our whole setting is quite general and can be represented by the following
diagram:
\begin{center}
  \begin{tikzcd}[/tikz/ampersand replacement=\&]
    \&
    \&
    {\mathbf{1}}
    \arrow[lldd, rightsquigarrow,"\ptextref", swap]
    \arrow[dd, rightsquigarrow,"\ptokenref", swap]
    \arrow[rrdd, bend left=50, rightsquigarrow,"\ptextref", near end]
    \&
    \&
    \\
    \&
    \&
    \&
    {\mathbb{N}}
    \arrow[ld, rightsquigarrow,"\ptokenest", swap]
    \arrow[rd, rightsquigarrow,"\ptextest", swap]
   \&
    \\
    \kleene{\alphabet}
    \arrow[rr, rightsquigarrow, "\tokenizer", swap]
    \arrow[rrrr, bend right=20, rightsquigarrow, "\idalpha", swap]
    \& \& 
    \kleene{\vocabulary}
    \arrow[rr, rightsquigarrow, "\cotokenizer", swap]
    \& \& 
    \kleene{\alphabet}
  \end{tikzcd}
\end{center}

When this diagram commutes, we have that $\cotokenizer \tokenizer \ptext = \ptext $ for all probability distributions \ptext over \kleene{\alphabet}, and \Cref{th:fundamental_principle} guarantees that $\ptextest \to \ptextref $ when $\ptokenest \to \ptokenref $.

Exact tokenizers have special properties.  First, if a tokenizer $(\tokenizer,\cotokenizer)$ is exact, then \cotokenizer is deterministic over the image of \tokenizer, because \idalpha also is. We make this notion formal in the following proposition.

\begin{restatable}{proposition}{exactdeterministic}
  \label{prop:exact_implies_deterministic}
    Let $\tokmodel = (\tokenizer, \cotokenizer)$ be an exact tokenizer from \kleene{\alphabet} to \kleene{\vocabulary}. Then \cotokenizer is deterministic on the support of $\tokenizer$, i.e., \cotokenizer is deterministic \tokenizer almost everywhere.
\end{restatable}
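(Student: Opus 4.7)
The plan is to unpack the equation $\cotokenizer\tokenizer=\idalpha$ componentwise using the composition formula~\eqref{eq:composition} and read off the conclusion from nonnegativity.

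First I would fix an arbitrary $\texts\in\kleene{\alphabet}$ and write out, for every $\texts'\in\kleene{\alphabet}$,
\[
\cotokenizer\tokenizer(\texts'\mid\texts)=\sum_{\tokseq\in\kleene{\vocabulary}}\cotokenizer(\texts'\mid\tokseq)\,\tokenizer(\tokseq\mid\texts)=\idalpha(\texts'\mid\texts),
\]
where the right-hand side equals $1$ if $\texts'=\texts$ and $0$ otherwise. Specializing to any $\texts'\neq\texts$, the sum is zero. Since every term $\cotokenizer(\texts'\mid\tokseq)\tokenizer(\tokseq\mid\texts)$ is nonnegative, each individual term must vanish, so for every $\tokseq$ with $\tokenizer(\tokseq\mid\texts)>0$ we get $\cotokenizer(\texts'\mid\tokseq)=0$ whenever $\texts'\neq\texts$.

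Next, I would combine this with the normalization constraint $\sum_{\texts'}\cotokenizer(\texts'\mid\tokseq)=1$ from the definition of a stochastic map. Together they force $\cotokenizer(\texts\mid\tokseq)=1$ for any such $\tokseq$, meaning the distribution $\cotokenizer(\cdot\mid\tokseq)$ is concentrated on the single point $\texts$ and is hence deterministic. Taking the union over $\texts\in\kleene{\alphabet}$, this establishes that on every $\tokseq$ lying in the support of $\tokenizer$ (which by definition is $\bigcup_{\texts}\mathrm{supp}(\tokenizer(\cdot\mid\texts))$), the map $\cotokenizer(\cdot\mid\tokseq)$ is deterministic.

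Finally, to upgrade this to the ``$\tokenizer$ almost everywhere'' phrasing, I would note that for any distribution \ptext on $\kleene{\alphabet}$, the pushforward $\tokenizer\ptext$ assigns probability zero to the complement of $\mathrm{supp}(\tokenizer)$, so the set of $\tokseq$ where $\cotokenizer(\cdot\mid\tokseq)$ fails to be deterministic is a $\tokenizer\ptext$-null set; this is precisely the almost-everywhere claim. The argument is essentially arithmetic, so I anticipate no real obstacle; the only subtle point is being careful to separate the support-based statement (which is what is genuinely proved) from the measure-theoretic restatement, and to observe that the argument does not require \tokenizer to be deterministic---it is purely a consequence of the vanishing of a sum of nonnegative terms.
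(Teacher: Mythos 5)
Your proof is correct and takes essentially the same approach as the paper: unpack $\cotokenizer\tokenizer=\idalpha$ via the composition formula and use the vanishing of a sum of nonnegative terms. The only cosmetic difference is that you read off the off-diagonal zeros $\cotokenizer\tokenizer(\texts'\mid\texts)=0$ and then invoke the normalization of $\cotokenizer(\cdot\mid\tokseq)$, whereas the paper reads off the diagonal entry $\cotokenizer\tokenizer(\texts\mid\texts)=1$ against the normalization of $\tokenizer(\cdot\mid\texts)$; the two computations are the same sum reorganized.
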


The condition $\cotokenizer \tokenizer = \idalpha $ means \tokenizer is a right inverse (or \defn{section}) of \cotokenizer and \cotokenizer is a left inverse (or \defn{retraction}) of \tokenizer.  The proof of Proposition \ref{prop:exact_implies_deterministic} shows that $\tokenizer(\tokseq\mid\texts)$ places probability zero on every $\tokseq \in \kleene{\vocabulary}$ such that $\cotokenizer(\texts\mid\tokseq)=0$. Since \cotokenizer is deterministic in this case, it follows that, for an exact tokenizer (\tokenizer, \cotokenizer), the encoder does not place positive probability mass on a token sequence for more than one text.  In other words, \tokenizer is injective.  Additionally, it must be that for each text $\texts$ there is a $\tokseq$ with $\cotokenizer(\texts\mid \tokseq)=1$, and therefore \cotokenizer is surjective.

\section{Statistical Concerns: Inconsistency and Ambiguity}
\label{sec:statistical}

While in most concrete cases of statistical language modeling a tokenizer's consistency is implicitly or explicitly assumed, there are many ways in which the conditions established in the previous section can, and in practice do, fail to be satisfied. In this section, we discuss two main statistical concerns to be considered when implementing or using tokenizers, namely inconsistency and ambiguity, and associate them with the properties of maps introduced in the previous section. The following definitions will be convenient.

\begin{definition}
  Given a tokenizer $\tokmodel = (\tokenizer, \cotokenizer)$, we say \tokmodel has a \defn{deterministic encoder} (resp. \defn{decoder}) or is \defn{$\boldsymbol{\tau}$-deterministic} (resp. \defn{$\boldsymbol{\kappa}$-deterministic}) if \tokenizer (resp. \cotokenizer) is a deterministic map. When a tokenizer \tokmodel is both \tokenizer-deterministic and exact, we have that \tokmodel is also \cotokenizer-deterministic, and $\cotokenizer=\tokenizerinv$ over $\tokenizer(\kleene{\alphabet})$. Therefore, in such a case we say \tokmodel is \defn{bijective}.
\end{definition}

\paragraph{Noninjective $\boldsymbol{\tau}$ and Inconsistency.}
Most commonly used tokenizers have deterministic encoders, including BPE \citep{sennrich-etal-2016-neural} and WordPiece \citep{Wu2016GooglesNM}, as well as Unigram \citep{kudo-2018-subword} when used without regularization. As we have seen, functions can be understood as a particular case of stochastic maps where the probability mass is concentrated on one element. Tokenizers with deterministic encoders thus constitute a simplified form of tokenization. However, even in this simplified setting, the consistency of the tokenization process is not guaranteed. The following example offers an elementary intuition of this circumstance.\looseness=-1

\begin{example}\label{ex:inconsistent}
    Consider the simple configuration represented in \Cref{fig:inconsistent}, where both \tokenizer and \cotokenizer are deterministic maps. Let $\ptextref(\texts_1)=0.2$ and $\ptextref(\texts_2)=\ptextref(\texts_3)=0.4$, with $\ptextref(\texts_i)=0$ for $i>3$. For $\ptokenref = \tokenizer\ptextref $, we have, therefore, $\ptokenref(\tokseq_1)=0.2$, $\ptokenref(\tokseq_2)=0$, and $\ptokenref(\tokseq_3)=0.8$, with $\ptokenref(\tokseq_i)=0$ for $i>3$, and hence $\cotokenizer \tokenizer \ptextref (\texts_1) = 0 \neq 0.2$, $\cotokenizer \tokenizer \ptextref (\texts_2) = 0.2 \neq 0.4$, and $\cotokenizer \tokenizer \ptextref (\texts_1) = 0.8 \neq 0.4$. Assuming \ptokenest is a consistent estimator of \ptokenref, the pushforward of \ptokenestn through \cotokenizer (i.e., $\cotokenizer\ptokenestn$) would result in an inconsistent estimation of \ptextref. Notice that the consistency of the tokenizer is relative to the distribution. Relative to a different distribution \ptext in \kleene{\alphabet}, where, for instance, $\ptext(\texts_1)=\ptext(\texts_2)=0$ and $\ptext=\ptextref$ otherwise, the tokenizer specified in \Cref{fig:inconsistent} is consistent. Hence the importance for a tokenizer to be exact.
\end{example}

\begin{figure}
  \begin{center}
    \begin{tabular}{c r}
    \raisebox{-.5\height}{
    \scalebox{.95}{
    \begin{tikzpicture}
      [
        >=stealth,
        thick,
        group/.style={ellipse, draw, minimum height=2cm, minimum width=1.8cm, label=above:#1},
        element/.style={minimum size=.6cm},
      ]
      \node (s1) [element]{$\texts_1$};
      \node (s2) [below=0cm of s1,element] {$\texts_2$};
      \node (s3) [below=0cm of s2, element] {$\texts_3$};
      \node (s4) [below=0cm of s3, element] {$\vdots$};
      \node (d1) [right=2cm of s1, element] {$\tokseq_1$};
      \node (d2) [below=0cm of d1, element] {$\tokseq_2$};
      \node (d3) [below=0cm of d2, element] {$\tokseq_3$};
      \node (d4) [below=0cm of d3, element] {$\vdots$};
      \draw [draw=TokenColor,->] (s1) -- (d1) node[midway,above] {\tokenizer};
      \draw [draw=TokenColor,->] (s2) -- (d3);
      \draw [draw=CharacterColor, transform canvas={yshift=-.2ex},->] (d1) -- (s2);
      \draw [draw=CharacterColor, transform canvas={yshift=-.2ex},->] (d2) -- (s2);
      \draw [draw=TokenColor, transform canvas={yshift=-.2ex},->] (s3) -- (d3);
      \draw [draw=CharacterColor, transform canvas={yshift=-1.2ex},->] (d3) -- (s3) node[midway,below] {\cotokenizer};
      \node [fit=(s1) (s2) (s3) (s4), group=\hspace{0.5em}\kleene{\alphabet}] {};
      \node [fit=(d1) (d2) (d3) (d4), group=\hspace{0.5em}\kleene{\vocabulary}] {};
    \end{tikzpicture}
    }}
    &
    \footnotesize
    
    \begin{tabular}{l c l c l }
      ${\ptextref(\texts_1)=0.2}$ & & $\ptokenref(\tokseq_1)=0.2$ & & $\cotokenizer
      \tokenizer \ptextref (\texts_1) = \color{red}{0}$\\
      [.2em]
      $\ptextref(\texts_2)=0.4$ & \color{TokenColor}{$\xrightarrow{\hspace{.1cm} \mbox{\small\tokenizer} \hspace{.1cm}}$} &$\ptokenref(\tokseq_2)=0$ & \color{CharacterColor}{$\xrightarrow{\hspace{.1cm}\mbox{\small\cotokenizer}\hspace{.1cm}}$} & $\cotokenizer
      \tokenizer \ptextref (\texts_2) =\color{red}{0.2}$\\
      [.5em]
      $\ptextref(\texts_3)=0.4$ & & $\ptokenref(\tokseq_3)=0.8$ & & $\cotokenizer
      \tokenizer \ptextref (\texts_3) =\color{red}{0.8}$
  \end{tabular}
  \end{tabular}

    \caption{Example of an inconsistent tokenizer}
    \label{fig:inconsistent}
  \end{center}
\end{figure}

As shown in \Cref{sec:framework} and illustrated in \Cref{ex:inconsistent}, a fundamental cause of a tokenizer's inconsistency is the lack of injectivity of the encoder \tokenizer. This is not just a theoretical concern. Even if in its abstract specification a tokenizer's encoder may appear to be injective, implementation decisions often introduce noninjective behaviors. These include normalizing operations, such as lowercasing, stripping accents, removing punctuation, or uniformizing whitespaces \citep[e.g.,][]{Moi_HuggingFace_s_Tokenizers_2023}. Regardless of how the core tokenization function is defined, including this preprocessing step as part of the tokenizer model results in a noninjective encoding that compromises the consistency of estimators.\looseness=-1

Even if text normalization is excluded from the decoding function, it can still happen that \tokenizer is undefined for some elements in \alphabet, and is, therefore, only a partial function.  If the exceptions are handled by returning a unique distinguished token in \vocabulary (e.g., an `unknown' token \vunk), then \tokenizer becomes noninjective, incurring the risk of inconsistency. The appeal to an \vunk token and the difficulties associated with it have been widely studied from the perspective of OOV terms, especially in the context of NMT \citep[e.g.,][]{luong-manning-2016-achieving,jean-etal-2015-using}, ultimately leading to subword tokenizers as a way of providing ``open vocabulary'' solutions \citep{sennrich-etal-2016-neural,Wu2016GooglesNM}.\footnote{For a tokenization-free alternative to the OOV problem, see, for instance, \citet{xue-etal-2022-byt5} and \citet{clark-etal-2022-canine}, who also offer a good overview of existing approaches to this problem.} Formally, it is enough to inject \alphabet into \vocabulary (i.e., to include the alphabet in the vocabulary) to achieve an open vocabulary, something most tokenizers do by default. However, open vocabulary solutions do not entirely remove the risk of noninjective decoding. Some open vocabulary models, for instance, limit the size of \alphabet to the sample's $k$ most frequent symbols, mapping all other symbols to an \aunk character \emph{in \alphabet} \citep[e.g.,][]{Wu2016GooglesNM}. Understood as a preprocessing step, this operation should not affect \tokenizer's injectivity. However, the use of copy models \citep[e.g.,][]{luong-etal-2015-addressing} that keep track of the original out-of-alphabet symbols to restore them in decoding, violates \emph{de facto} the tokenizer's injectivity, and with it, the model's consistency over strings including those symbols.

Regardless of whether all symbols in the training sample are included in \alphabet, \emph{out-of-alphabet} symbols can always be encountered at test or inference time. The recourse to a distinguished \unk symbol \emph{either in \alphabet or \vocabulary} must, therefore, be handled in such a way that the injectivity of \tokenizer is guaranteed. The use of a stochastic \cotokenizer \citep[e.g.,][]{mielke2019spell} over the restricted domains of out-of-alphabet or out-of-vocabulary elements could, in principle, provide a novel way of addressing this problem in agreement with consistency concerns.

\paragraph{Noninjective $\boldsymbol{\kappa}$ and Ambiguity.}
Whenever \cotokenizer is noninjective, the tokenizer introduces \emph{ambiguity} in the model because more than one token sequence is mapped into a unique text. In bijective tokenizers, decoding is injective over the encoder's image, thus preventing ambiguity in principle. However, in practice, whenever $\tokenizer(\kleene{\alphabet})$ is a proper subset of $\kleene{\vocabulary}$, it may happen that the probability mass placed by the estimated language model outside the image of \tokenizer is nonzero, reintroducing ambiguity into the model (cf. \Cref{ex:t-h-e} in the Appendix for an elementary illustration). This ambiguity is, however, \defn{spurious} because \tokenizer was assumed to be deterministic, and hence the ambiguity does not originate from the reference distribution \ptextref, but is a side effect of the estimator. An obvious source of spurious ambiguity lies in the fact that consistency is a property defined \emph{in the limit}. As a consequence, for any $\tokseq \in \kleene{\vocabulary}$, $\ptokenestn(\tokseq)$ can and will generally differ from $\ptokenref(\tokseq)$. Spurious ambiguity can also result from the fact that, due to the properties of gradient descent and certain activation functions such as softmax, neural models are incapable of assigning zero probability to elements of \kleene{\vocabulary}. While spurious ambiguity has been identified among the motivations for introducing subword regularization \citep{kudo-2018-subword,provilkov-etal-2020-bpe}, it is often overlooked or disregarded, despite its potential nonnegligible effect on estimation \citep{cao-rimell-2021-evaluate}, although mostly in the case of ``strongly out-of-domain evaluation sets'' \citep{chirkova-etal-2023-marginalize}.

Spurious ambiguity is not the only kind of ambiguity that can result from the use of tokenization in language models. Whenever a tokenizer model is stochastic, a deterministic \cotokenizer must be noninjective for the model to preserve the consistency of estimators. However, the ambiguity thus introduced is not spurious in that it is deliberately designed for statistical purposes. In current tokenization practices, the main reason for the introduction of \defn{stochastic} ambiguity is regularization \citep{kudo-2018-subword,provilkov-etal-2020-bpe}. The claim is that, by exhibiting different token sequences corresponding to the same text during training, a model increases its capability to handle text compositionality as well as its robustness to noise and tokenization errors. However, one could also conceive of a stochastic tokenizer where the possible images of a text reflect the objective probabilities of all \defn{linguistic} ambiguities potentially affecting it (e.g.,  $\vfont{an}\vdot \vfont{icecream}$, $\vfont{an}\vdot \vfont{ice}\vdot \vfont{cream}$, $\vfont{a}\vdot \vfont{nice}\vdot \vfont{cream}$ as three possible token sequences for the text: $\afont{a}\adot \afont{n}\adot \afont{i}\adot \afont{c}\adot \afont{e}\adot \afont{c}\adot \afont{r}\adot \afont{e}\adot \afont{a}\adot \afont{m}$). This would require, however, to enhance tokenizers with linguistically motivated segmentation \citep[see, for instance,][]{bostrom-durrett-2020-byte,hofmann-etal-2021-superbizarre,gow-smith-etal-2022-improving,beinborn-pinter-2023-analyzing}.

Although all these classes of ambiguity (spurious, stochastic, and linguistic) are both formally and semantically different, they all represent the same challenge for the tokenizer's consistency: The probability mass indirectly assigned by the model to one text in a language is spread over different token sequences. Notice that all these cases of ambiguity can coexist and, hence, their impact is difficult to evaluate. Yet, from a formal perspective, the solution for all these cases is the same: The computation of $\cotokenizer\ptokenestn$ for a single text $\texts \in \kleene{\alphabet}$ requires summing over all its preimages \tokseq under \cotokenizer, for which $\ptokenestn(\tokseq) > 0$, following the composition of stochastic maps presented in the previous section (\Cref{eq:composition}). However, such an operation can be computationally challenging because it can imply summing over a large or even infinite number of terms. For different strategies to address this challenge, see \citet{vanmerriënboer2017multiscalesequencemodelinglearned,buckman-neubig-2018-neural,grave-etal-2019-training,hannun2020differentiableweightedfinitestatetransducers,cao-rimell-2021-evaluate,vieira2024languagemodelstokenslanguage}.

\section{Computational Concerns: Finiteness and Sequentiality}
\label{sec:computational}

As the end of the previous section shows, even when a tokenizer model is consistent and all statistical concerns are taken into account, there are still computational aspects that can hinder the practice of tokenization. In this section, we turn to issues of finiteness and sequentiality.

\paragraph{Multiplicativity and Finiteness.}
\Cref{df:tokenizer,df:consistent,df:exact} are general enough to allow for many kinds of encoding and decoding functions, including uncomputable ones; see \Cref{ex:uncomputable} in the Appendix for an example. However, even when a tokenizer model is computable, its tractability is not guaranteed. Indeed, there are many reasons that could make the computation of tokenization intractable. Many of the operations that define tokenizer models involve sums over infinite sets. This is particularly true for the composition of stochastic maps whenever it is performed over an infinite domain, as in our case. Therefore, it is crucial to assess the tractability not only of \tokenizer and \cotokenizer, but also of their composition $\cotokenizer\tokenizer$.

We have seen that when a tokenizer model is exact, \tokenizer is a section of \cotokenizer, or equivalently, \tokenizer is injective. It follows that, for any $\texts \in \kleene{\alphabet}$, \tokenizer concentrates the probability mass on only a subset of \kleene{\vocabulary}. This property can help reduce computational costs by restricting the sums to just those subsets. However, without further constraints, those subsets can still be infinite. For this reason, we introduce the following definitions.

\begin{definition}
    We say a tokenizer model $\tokmodel = (\tokenizer,\cotokenizer)$ is \defn{multiplicative} if its decoder \cotokenizer respects the concatenation products, i.e., if $\cotokenizer(\tokseq'\vdot\tokseq'')=\cotokenizer(\tokseq')\adot\cotokenizer(\tokseq'')$.
\end{definition}

\begin{definition}
    We say the \defn{kernel} of a multiplicative tokenizer's decoder \cotokenizer is \defn{trivial} if \cotokenizer maps nonempty token sequences to nonempty token sequences (i.e., if $\tokseq \neq \onevoc$ then $\cotokenizer(\tokseq)\neq \onealpha$).
\end{definition}

The most commonly used tokenizers, including BPE, WordPiece, and Unigram, are multiplicative. Notice that, for a kernel of a multiplicative tokenizer's decoder to be trivial, it is enough that $\cotokenizer(\token)\neq \onealpha $ for any $\token \in \vocabulary$. This implies that token sequences do not include special tokens that are erased during decoding. Importantly, when a multiplicative tokenizer's decoder has a trivial kernel, a decoded text cannot be shorter than the token sequence from which it has been decoded. This simple observation guarantees that the number of preimages of a text under \cotokenizer is finite.  More precisely, we have the following proposition.\looseness=-1

\begin{restatable}{proposition}{multiplicativebound}\label{th:bound_delta}
    Let $\tokmodel = (\tokenizer,\cotokenizer)$ be a multiplicative tokenizer model whose decoder's kernel is trivial. For any $\tokseq\in \kleene{\vocabulary}$, $\cotokenizer(\tokseq)=\texts \implies |\tokseq|\leq|\texts|$.
\end{restatable}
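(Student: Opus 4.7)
The plan is to proceed by a straightforward induction on the length of the token sequence $\tokseq$, using multiplicativity to split $\cotokenizer(\tokseq)$ into a product of single-token images and the trivial kernel hypothesis to bound each factor below by length one. Writing $\tokseq = \token_1 \vdot \cdots \vdot \token_n$ with $n = |\tokseq|$, I would iterate multiplicativity to obtain $\cotokenizer(\tokseq) = \cotokenizer(\token_1) \adot \cdots \adot \cotokenizer(\token_n)$, observe that lengths add under concatenation in $\kleene{\alphabet}$, and conclude $|\cotokenizer(\tokseq)| = \sum_{i=1}^n |\cotokenizer(\token_i)| \geq n$ since each $\cotokenizer(\token_i) \neq \onealpha$ by triviality of the kernel.

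Before running the induction, I would first handle the base case $n = 0$, where $\tokseq = \onevoc$. Here multiplicativity applied with $\tokseq' = \tokseq'' = \onevoc$ gives $\cotokenizer(\onevoc) = \cotokenizer(\onevoc) \adot \cotokenizer(\onevoc)$, which forces $\cotokenizer(\onevoc) = \onealpha$ since this is the only text equal to its own square. Hence $|\texts| = 0 = |\tokseq|$, and the bound holds. The inductive step is then: assuming $|\cotokenizer(\tokseq')| \geq |\tokseq'|$ whenever $|\tokseq'| < n$, split $\tokseq = \tokseq' \vdot \token_n$ with $|\tokseq'| = n-1$, apply multiplicativity once to get $|\cotokenizer(\tokseq)| = |\cotokenizer(\tokseq')| + |\cotokenizer(\token_n)|$, use the inductive hypothesis on the prefix, and use triviality of the kernel on the final token to pick up the extra unit.

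I do not anticipate a genuine obstacle here; the proof is essentially a bookkeeping argument once multiplicativity and triviality are unpacked. The only subtlety worth flagging is that multiplicativity as stated is a deterministic equation, so one should note explicitly that the condition $\cotokenizer(\tokseq) = \texts$ in the proposition is the deterministic shorthand $\cotokenizer(\texts \mid \tokseq) = 1$, and that the identity $\cotokenizer(\tokseq' \vdot \tokseq'') = \cotokenizer(\tokseq') \adot \cotokenizer(\tokseq'')$ then transports directly to an equation of lengths via the fact that $|\abstractstring' \adot \abstractstring''| = |\abstractstring'| + |\abstractstring''|$ in the free monoid $\kleene{\alphabet}$. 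With that convention fixed, the induction closes cleanly.
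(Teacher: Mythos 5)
Your proof is correct and follows essentially the same route as the paper's: induction on $|\tokseq|$, peeling off the last token via multiplicativity and using triviality of the kernel to gain at least one character per token. The only (minor) difference is that you also treat the base case $\tokseq=\onevoc$ explicitly, deducing $\cotokenizer(\onevoc)=\onealpha$ from multiplicativity, whereas the paper starts its induction at $|\tokseq|=1$.
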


\begin{restatable}{corollary}{multiplicativefinite}\label{th:finite}
  Let $\tokmodel = (\tokenizer,\cotokenizer)$ be a multiplicative tokenizer model whose decoder's kernel is trivial. Then for any text \texts, the set $\cotokenizerinv(\texts)$ is finite.
\end{restatable}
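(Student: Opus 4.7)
The plan is to deduce the finiteness of $\cotokenizerinv(\texts)$ by combining the length bound supplied by \Cref{th:bound_delta} with the fact that, by the convention adopted in \Cref{subsec:preliminaries_1}, the vocabulary \vocabulary is a (finite) alphabet.

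First, I would unpack the definition of preimage: $\cotokenizerinv(\texts) = \{\tokseq \in \kleene{\vocabulary} \mid \cotokenizer(\tokseq) = \texts\}$. Next, I would invoke \Cref{th:bound_delta} directly on every element of this set: since \tokmodel is multiplicative with trivial decoder kernel, any $\tokseq$ satisfying $\cotokenizer(\tokseq) = \texts$ must satisfy $|\tokseq| \leq |\texts|$. Therefore $\cotokenizerinv(\texts) \subseteq \vocabulary^{\leq |\texts|}$, where $\vocabulary^{\leq |\texts|} = \bigcup_{k=0}^{|\texts|} \vocabulary^{k}$ denotes the set of token sequences of length at most $|\texts|$.

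Finally, I would conclude by the standard counting argument: because \vocabulary is finite, each $\vocabulary^{k}$ is finite (of cardinality $|\vocabulary|^{k}$), and a finite union of finite sets is finite. A subset of a finite set being finite, $\cotokenizerinv(\texts)$ is finite, as claimed.

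There is essentially no obstacle: all the substantive content lives in \Cref{th:bound_delta}, and the corollary is a one-line consequence once the length bound is in hand. The only point worth flagging explicitly is the reliance on the finiteness of \vocabulary, which is implicit in the terminological choice of treating \vocabulary as an alphabet rather than, say, as a countably infinite token set.
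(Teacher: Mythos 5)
Your proof is correct and follows exactly the paper's reasoning: the paper likewise deduces from \Cref{th:bound_delta} that every preimage of a text of length $n$ has length at most $n$, and then bounds $|\cotokenizerinv(\texts)|$ by $\sum_{i=1}^{n}|\vocabulary|^{i}$, relying on the finiteness of \vocabulary as an alphabet. Your explicit flag that finiteness of \vocabulary is the load-bearing assumption is a fair observation, but there is no substantive difference in approach.
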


\Cref{th:bound_delta} guarantees that, if no token in the vocabulary is mapped to the empty string, then the length of every preimage of a text \texts of length $n$ has length less than or equal to $n$. So the number of elements in $\cotokenizerinv(\texts)$, and hence also the support of \tokenizer for any given $\texts \in \kleene{\alphabet}$ when $(\tokenizer, \cotokenizer)$ is exact, is bounded by $\sum_{i=1}^n |\vocabulary|^i$. Since this bound is exponential in $n$, the exact or approximate computation of a tokenizer's encoding and decoding so that the consistency of the language model is not compromised requires the appeal to multiple strategies as the ones mentioned at the end of \Cref{sec:statistical}. Ideally, the complexity of a tokenizer model (that is, of the composition $\cotokenizer\tokenizer$) should be at most linear in the length of the input text. By placing all the probability mass on one token sequence, $\cotokenizer\tokenizer$ in exact deterministic tokenizers, such as BPE and WordPiece, can be computed in linear time as long as \tokenizer and \cotokenizer can be computed in linear time. However, rigorously handling spurious ambiguity still represents a challenge in these cases. In follow-up work, \citet{vieira2024languagemodelstokenslanguage} exploit the finiteness of $\cotokenizerinv(\texts)$ to introduce both exact and approximate algorithms for converting token-level language models to character-level ones, assuming \cotokenizer is \defn{prefix monotone}, i.e., $\tokseq'\preceq \tokseq \implies \cotokenizer(\tokseq') \preceq \cotokenizer(\tokseq)$, a weaker condition implied by multiplicativity which is crucial in autoregressive models.\footnote{Note that \citet{vieira2024languagemodelstokenslanguage} call ``non-erasing'' the fact that \cotokenizer has a trivial kernel.}

\paragraph{Bounded Variation and Sequentiality.}
Finally, even though multiplicativity ensures that the length of every preimage of $\cotokenizerinv(\texts)$ is bounded by the length of \texts, the latter may still be unbounded. In practice, the bounded character of tokenization is secured externally by fixing a hyperparameter that artificially limits the length of input texts. However, boundedness can be addressed as an internal property of a tokenizer. For this purpose, we introduce the following definitions, adapted from \citet{Berstel1979}.\looseness=-1

\begin{definition}
  The \defn{(left) distance} between two strings $\abstractstring,\abstractstring' \in \abstractStrings$ is the number:
  \begin{equation*}
    \| \abstractstring, \abstractstring' \| \defeq | \abstractstring | +  | \abstractstring' | - 2\,| \abstractstring \wedge \abstractstring' |,
  \end{equation*}
  where $ \abstractstring \wedge \abstractstring' $ is the longest common prefix of $\abstractstring$ and $\abstractstring'$.
\end{definition}

\begin{definition}\label{df:bounded_variation}
  A function $f\colon \abstractStrings \to \abstractStringstwo$ has \defn{bounded
  variation} if and only if $\forall k\geq 0$, $\exists C_k \geq 0$:
  \begin{equation*}
    \abstractstring,\abstractstring' \in \abstractStrings, \| \abstractstring,\abstractstring' \| \leq k
    \implies \| f(\abstractstring),f(\abstractstring') \| \leq C_k.
  \end{equation*}
\end{definition}

The importance of \Cref{df:bounded_variation} is that, following \citeauthor{Choffrut1979Generalization}'s (\citeyear{Choffrut1979Generalization}) theorem, whenever $f$ preserves rational sets, if $f$ has bounded variation, then it is \defn{subsequential}, that is, it can be realized by a deterministic finite-state transducer enhanced with a function over terminal states (cf. \Cref{df:subsequential_f} in the Appendix). Since subsequential functions are closed under composition \citep[prop.~IV.2.5]{Berstel1979}, for $\cotokenizer\tokenizer$ to be subsequential, it is enough that both \tokenizer and \cotokenizer are. Given that most commonly used tokenizers are multiplicative, the following result is significant.

\begin{restatable}{proposition}{multiplicativeboundedvariation}\label{th:multiplicative_bound}
  If a function $f \colon \abstractStrings \to \abstractStringstwo$ is multiplicative then it has bounded variation.
\end{restatable}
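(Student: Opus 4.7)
The plan is to exploit multiplicativity to factor out a common prefix on the image side, then bound the lengths of the remaining tails using finiteness of the alphabet. First, given $\abstractstring,\abstractstring'\in\abstractStrings$ with $\|\abstractstring,\abstractstring'\|\leq k$, I would let $w = \abstractstring\wedge\abstractstring'$ be the longest common prefix, so that $\abstractstring = w\cdot u$ and $\abstractstring' = w\cdot u'$ for unique tails $u,u'\in\abstractStrings$, with $|u| + |u'| = \|\abstractstring,\abstractstring'\|\leq k$.

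Next, applying multiplicativity gives $f(\abstractstring) = f(w)\cdot f(u)$ and $f(\abstractstring') = f(w)\cdot f(u')$, so $f(w)$ is a common prefix of $f(\abstractstring)$ and $f(\abstractstring')$. Hence $|f(\abstractstring)\wedge f(\abstractstring')|\geq |f(w)|$, and plugging into the definition of the distance yields
\begin{align*}
\|f(\abstractstring),f(\abstractstring')\|
&= |f(\abstractstring)| + |f(\abstractstring')| - 2\,|f(\abstractstring)\wedge f(\abstractstring')|\\
&\leq \bigl(|f(w)|+|f(u)|\bigr) + \bigl(|f(w)|+|f(u')|\bigr) - 2\,|f(w)|\\
&= |f(u)| + |f(u')|.
\end{align*}

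The remaining step is to bound $|f(u)|+|f(u')|$ in terms of $k$ alone. Because $\abstractalphabet$ is finite (by definition of an alphabet), the quantity $M\defeq \max_{\abstractalpha\in\abstractalphabet}|f(\abstractalpha)|$ is a well-defined nonnegative integer. Applying multiplicativity character by character to any $v = v_1\cdots v_n\in\abstractStrings$ gives $|f(v)| = \sum_{i=1}^n |f(v_i)|\leq M\,|v|$. Therefore
\begin{equation*}
\|f(\abstractstring),f(\abstractstring')\|\leq |f(u)|+|f(u')|\leq M\,(|u|+|u'|)\leq M\,k,
\end{equation*}
so the constant $C_k \defeq M\,k$ witnesses bounded variation.

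The only subtlety to check is that $f$ does not increase length excessively on single characters, which is handled precisely by the finiteness of $\abstractalphabet$; were $\abstractalphabet$ infinite the argument would fail, as $M$ could be infinite. The prefix-preservation step is immediate from multiplicativity and is not really an obstacle. Thus, no single step is technically hard; the crux is the factorization $\abstractstring = w\cdot u$, $\abstractstring'=w\cdot u'$ combined with the elementary bound $|f(v)|\leq M|v|$ coming from the finite alphabet assumption.
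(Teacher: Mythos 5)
Your proof is correct and follows essentially the same route as the paper's: factor $\abstractstring$ and $\abstractstring'$ through their longest common prefix, use multiplicativity to reduce $\|f(\abstractstring),f(\abstractstring')\|$ to a quantity depending only on the tails, and then bound that quantity by a constant depending only on $k$. The only (cosmetic) difference is that you produce the explicit constant $C_k = Mk$ via a character-by-character length bound, whereas the paper takes $C_k$ to be a maximum of $|f(\abstractstring'')|+|f(\abstractstring''')|$ over the finitely many pairs of tails with $|\abstractstring''|+|\abstractstring'''|\leq k$; both rest on the finiteness of the alphabet.
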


However, in the context of tokenizers, multiplicativity concerns only the decoder \cotokenizer (a multiplicative encoder \tokenizer is not a desirable feature for a tokenizer), thus shifting the focus towards the encoder \tokenizer. The ``maximal munch'' approach \citep{reps1998munch,palmer2000tokenisation} adopted by WordPiece, for instance, iteratively maps the successive longest prefixes of a text to tokens in the vocabulary.  WordPiece's encoder is thus bounded by the maximum length of the preimages of \vocabulary under \tokenizer, and is, therefore, subsequential \citep[cf.][for a realization of WordPiece by a finite-state transducer]{song-etal-2021-fast}. Moreover, assuming specific conditions on the structure of the list of rules or ``merges'', \citet{berglund2023formalizing} and \citet{berglund2024constructingbpetokenizationdfa} proposed an algorithm for constructing deterministic finite automata realizing BPE's encoder, thus suggesting that, under such conditions, BPE is also subsequential.\looseness=-1

\section{Conclusion}

In this work, we have addressed the use of token representations in NLP from a foundational perspective. Relying on the category of stochastic maps as an elementary formal tool, we proposed a general definition of a tokenizer as an arbitrary pair of composable maps.  The framework proposed enabled us to formally establish several properties of tokenization, and most importantly, the necessary and sufficient condition for a tokenizer to preserve the consistency of estimators. Furthermore, our approach allowed to shed new theoretical light on known issues concerning tokenization, namely inconsistency, ambiguity, finiteness, and sequentiality, by characterizing the latter through formal properties of composable maps such as injectivity, multiplicativity, or bounded variation. We believe this framework will inform future empirical research and contribute to establishing and developing theoretical and practical aspects of representation learning in NLP on solid grounds, especially in cases where the reliance on a model requires to go beyond its mere performance, and take into account properties such as formal guarantees, verification, theoretical soundness, interpretability, or liability.\looseness=-1

\section{Acknowledgements}

The authors thank Saeed Zakeri for pointing out an interesting interplay between a general fact about the Banach spaces $L^1(X)$ for any measure space $X$ and the fact that for the little $l^p$ spaces, the $l^\infty$ norm is smaller than the $l^1$ norm, which led to the proof of \Cref{lem:pointwise_lim_0}. The authors also thank Li Du for useful feedback at the early stages of this work.

\bibliographystyle{iclr2025_conference}
\bibliography{anthology,anthology_p2,custom}

\clearpage

\section{Appendix}
\label{sec:appendix}

\subsection{Proofs}

\pointwiselim*

\begin{proof}
  Fatou's lemma applied to $X$ with the counting measure implies that for any
  sequence of nonnegative functions $\{f_n\}$ on $X$,  
  \begin{equation}\label{fatou}
  \sum_{x\in X} \liminf_{n \to \infty} f_n(x) \leq \liminf_{n\to \infty} \sum_{x\in X} f_n(x).
  \end{equation}
  We will apply this to $f_n := \abstractptestn + \abstractp -|\abstractptestn-\abstractp|$.  First, note that since 
  $\lim_{n \to \infty} \abstractptestn(x) = \abstractp(x)$, we have $\liminf_{n \to \infty} f_n(x) = \abstractp(x)+\abstractp(x)-0 = 2\abstractp(x)$ so the left hand side of \Cref{fatou} becomes $\sum_{x\in X}2\abstractp(x)=2$.  Therefore,
  \begin{subequations}
  \begin{align}
  2 & \leq \liminf_{n \to \infty} \sum_{x\in X} f_n(x) \\
  & = \liminf_{n \to \infty} \sum_{x\in X} \abstractptestn(x) + \abstractp(x) -|\abstractptestn(x)-\abstractp(x)| \\
  & = \liminf_{n \to \infty} \sum_{x\in X} \abstractptestn(x) + \sum_{x\in X} \abstractp(x) - \sum_{x\in X}|\abstractptestn(x)-\abstractp(x)| \\
  & = \liminf_{n \to \infty} 1 + 1 - \sum_{x\in X}|\abstractptestn(x)-\abstractp(x)| \\
  & = 2 - \limsup_{n \to \infty} \sum_{x\in X} |\abstractptestn(x)-\abstractp(x)|.
  \end{align} 
  \end{subequations}
  It follows that $\limsup_{n \to \infty} \sum_{x\in X} |\abstractptestn(x)-\abstractp(x)| \leq 0$, so
  $\lim_{n\to \infty}\sum_{x\in X} |\abstractptestn(x)-\abstractp(x)| = 0$.
\end{proof}

\pointwiseuniform*

\begin{proof}
  Since the sum of nonnegative numbers is always greater than any particular
  term in the sum and $\lim_{n\to \infty}\sum_{x\in X} |\abstractptestn(x)-\abstractp(x)| = 0$, we
  can conclude that the sequence $\abstractptest \to \abstractp$ uniformly.
\end{proof}

\limitmap*

\begin{proof}
  Fix $y\in Y$.  We will show that $\{f\abstractptestn(y)\}\to f\abstractp(y)$.
  By  \Cref{lem:pointwise_lim_0}, we have that $\lim_{n\to \infty} \sum_{x
  \in X} \lvert \abstractptestn\left(x\right) - \abstractp\left(x\right)\rvert =0.$ Therefore,
\begin{subequations}
\begin{align}
    \lim_{n\to \infty}\left\lvert f \abstractptestn (y) - f \abstractp (y) \right\rvert & = \lim_{n\to \infty}\left\lvert \sum_{x\in X} f(y\mid x) \abstractptestn(x) - \sum_{x\in X} f(y\mid x) \abstractp(x)\right\lvert\\
      &\leq
      \lim_{n\to \infty}\sum_{x\in X} f(y\mid x) \left\lvert \abstractptestn\left(x\right) - \abstractp\left(x\right)\right\lvert \\
    &\leq \lim_{n\to \infty}\sum_{x\in X}  \lvert \abstractptestn\left(x\right) - \abstractp\left(x\right)\rvert\\
    &= 0.
  \end{align}
\end{subequations}
\end{proof}

\fundamentalprinciple*

\begin{proof}
  By hypothesis, $\ptokenest  \to \ptokenref$ and by definition $\ptokenref= \tokenizer\ptextref$. 
  By \Cref{lem:limit_map},
  applying \cotokenizer to both sides, we have that $\{\cotokenizer \ptokenestn\} \to \cotokenizer\ptokenref$ and so \[\{\cotokenizer \ptokenestn\} \to \cotokenizer\tokenizer\ptextref.\]  Therefore, if $\cotokenizer\tokenizer\ptextref = \ptextref$ we have $\{\cotokenizer \ptokenestn\} \to \ptextref.$  Conversely, if $\{\cotokenizer \ptokenestn\} \to \ptextref$ we have both $\{\cotokenizer \ptokenestn\} \to \ptextref$ and $\{\cotokenizer \ptokenestn\} \to\cotokenizer\tokenizer\ptextref$ and so by the uniqueness of limits, $\cotokenizer\tokenizer\ptextref = \ptextref$.
\end{proof}

\consistentexact*

\begin{proof}  Exact means $\cotokenizer\tokenizer=\idalpha$ hence
  $\cotokenizer\tokenizer \ptext =\ptext$ for every probability distribution
  \ptext on \kleene{\alphabet}.  To prove the other direction, suppose that
  $\cotokenizer\tokenizer\ptext = \ptext$ for every \ptext on
  $\kleene{\alphabet}.$  Fix an arbitrary $\texts \in \kleene{\alphabet}$.  Let
  $p_{\texts}$ be the point mass distribution on $\kleene{\alphabet}$ concentrated
  on $\texts$.  So $p_{\texts}(\texts) = 1$ and $p_{\texts}(\texts')=0$ for any
  $\texts'\neq \texts.$  By hypothesis, $p_{\texts} =\cotokenizer\tokenizer
  p_{\texts}.$  Apply to $\texts$ to get $1 = \cotokenizer\tokenizer
  p_{\texts}(\texts)$.  The right hand side, as the pushforward of $p_{\texts}$
  via $\cotokenizer\tokenizer$, says $1 = p_{\texts}
  (\cotokenizer\tokenizer(\texts)).$  Since $p_{\texts}$ takes the value $1$ at
  only one point, it follows that the argument $\cotokenizer\tokenizer(\texts) =
  \texts$.  Since $\texts$ was arbitrary, we conclude that $\cotokenizer
  \tokenizer = \idalpha$, i.e., the tokenizer $(\tokenizer, \cotokenizer)$ is
  exact.
\end{proof}

\exactdeterministic*
  
\begin{proof}
  Assume $(\tokenizer,\cotokenizer)$ is exact, i.e.,  $\cotokenizer\tokenizer = \idalpha$, and let $\texts \in \kleene{\alphabet}$.
  
  Since $\sum_{\tokseq \in \kleene{\vocabulary}}
  \tokenizer(\tokseq\mid\texts)=1$ and $\cotokenizer\tokenizer(\texts|\texts)=\idalpha(\texts\mid\texts)=1$. We
  obtain:
  \begin{subequations}
  \begin{align}
    0 & = \sum_{\tokseq \in \kleene{\vocabulary}} \tokenizer(\tokseq\mid\texts) - \cotokenizer\tokenizer(\texts|\texts) \\ 
    &= \sum_{\tokseq \in \kleene{\vocabulary}} \tokenizer(\tokseq\mid\texts) - \sum_{\tokseq \in \kleene{\vocabulary}}\cotokenizer(\texts\mid\tokseq)\tokenizer(\tokseq\mid\texts)\\
    &= \sum_{\tokseq \in \kleene{\vocabulary}} \tokenizer(\tokseq\mid\texts) - \cotokenizer(\texts\mid\tokseq)\tokenizer(\tokseq\mid\texts)\\
    &= \sum_{\tokseq \in \kleene{\vocabulary}} \tokenizer(\tokseq\mid\texts) (1-\cotokenizer(\texts\mid\tokseq)).\label{eq:konederivation}
  \end{align}
\end{subequations}
  Since \Cref{eq:konederivation} is a sum of nonnegative terms that equals zero, each terms must be zero.  It follows that if $\tokenizer(\tokseq\mid\texts)>0$ for some $\tokseq$  (e.g.: $\tokseq$ is in the support of $\tokenizer$) then then $1-\cotokenizer(\texts\mid\tokseq)=0 \Leftrightarrow \cotokenizer(\texts\mid\tokseq) = 1$.  From which it follows that \[\cotokenizer(\texts'\mid\tokseq) = \begin{cases}
       1& \text{if $\texts'=\texts$}\\
   0 & \text{if $\texts'\neq \texts.$}
     \end{cases}.\]
     \end{proof}
  
\multiplicativebound*

\begin{proof}
  We can reason by induction. Let $|\tokseq|=m$ and $|\texts|=n$. The property
  is true when $m=1$ since 1 is the minimun length of any possible image of
  \cotokenizer. Assume it is true for $m=k$ and let $\tokseq =
  \token_1\vdot\ldots\vdot\token_{k}\vdot\token_{k+1}$. Then
  $\cotokenizer(\tokseq) = \cotokenizer(\token_1\vdot\ldots\vdot\token_{k})\adot
  \cotokenizer(\token_{k+1}) = \chars_1\adot\ldots\adot\chars_r \adot
  \chars_{r+1}\adot\ldots\adot\chars_{r+s}$, where $\cotokenizer(\token_{k+1}) =
  \chars_{r+1}\adot\ldots\adot\chars_{r+s}$. Since $r \geq k$ and $s \geq 1$, we
  have that $r+s \geq k+1$.
\end{proof}

\multiplicativeboundedvariation*

\begin{proof}
Assume $f$ is multiplicative and let $k$ be given.  Suppose $\abstractstring,\abstractstring' \in \abstractStrings$ satisfy $\|\abstractstring,\abstractstring'\| \leq k$.  Let
\[C_k = \max_{\substack{\abstractstring'',\abstractstring''' \in \abstractStrings \\ |\abstractstring''|+|\abstractstring'''|\leq k }} \{ | f (\abstractstring'')| + | f (\abstractstring''')|\}.\]
Write $\abstractstring$ and $\abstractstring'$ as $\abstractstring=\abstractstring\wedge \abstractstring' \cdot \abstractstring''$ and $\abstractstring'=\abstractstring\wedge \abstractstring' \cdot \abstractstring'''$. Notice that $\|\abstractstring'',\abstractstring'''\| = |\abstractstring''|+|\abstractstring'''| \leq k$, and look at $\|f(\abstractstring), f (\abstractstring')\|$:
\begin{subequations}
\begin{align}
  \|f(\abstractstring), f (\abstractstring')\| & = \|f (\abstractstring\wedge \abstractstring' \cdot \abstractstring''), f (\abstractstring\wedge \abstractstring' \cdot \abstractstring''') \| \\
    &=\|f (\abstractstring\wedge \abstractstring' ) \cdot f (\abstractstring''), f (\abstractstring\wedge \abstractstring' ) \cdot f (\abstractstring''')\| \\
    &= \|f (\abstractstring''), f (\abstractstring''')\| \\
    &\leq |f (\abstractstring'')| +| f (\abstractstring''')| \\
    &\leq C_k
\end{align}
\end{subequations}

\end{proof}

\subsection{Examples}

\begin{example}\label{ex:t-h-e}
  
  Take, for instance, a bijective tokenizer such as BPE or WordPiece, with \cotokenizer performing concatenation of the token maps in the usual way. Let $\alphabet = \{\textnormal{\afont{t}},\textnormal{\afont{h}},\textnormal{\afont{e}}\}$ and $\vocabulary = \{\textnormal{\vfont{t}}, \textnormal{\vfont{h}}, \textnormal{\vfont{e}},\textnormal{\vfont{th}}, \textnormal{\vfont{he}}\}$. In this minimal configuration, it is easy to see that $\cotokenizer(\textnormal{\vfont{t}}\vdot\textnormal{\vfont{h}} \vdot \textnormal{\vfont{e}}) = \cotokenizer(\textnormal{\vfont{t}}\vdot\textnormal{\vfont{he}}) = \cotokenizer(\textnormal{\vfont{th}}\vdot\textnormal{\vfont{e}}) = \textnormal{\afont{t}}\adot\textnormal{\afont{h}}\adot\textnormal{\afont{e}} \in \kleene{\alphabet}$. However, BPE or WordPiece being bijective tokenizers, $\tokenizer$ can only map the value of $\cotokenizer$ to at most one of the latter's arguments, say $\tokenizer(\textnormal{\afont{t}}\adot\textnormal{\afont{h}}\adot\textnormal{\afont{e}}) = \textnormal{\vfont{th}}\vdot\textnormal{\vfont{e}}$. We then have that $\tokenizer(\cotokenizer(\textnormal{\vfont{t}}\vdot\textnormal{\vfont{he}})) \neq \textnormal{\vfont{t}}\vdot\textnormal{\vfont{he}} $ (and likewise for $\textnormal{\vfont{t}}\vdot\textnormal{\vfont{h}} \vdot \textnormal{\vfont{e}}$). If the estimator happens to place nonzero probability mass on any of the latter two token sequences, the model will exhibit spurious ambiguity.

\end{example}

\begin{example}\label{ex:uncomputable}
  
  Let $\alphabet = \vocabulary = \{\textnormal{\texttt{0}},\textnormal{\texttt{1}}\}$, and define $\tokmodel_{unc}=(\tokenizer_{unc}, \cotokenizer_{unc})$  as a deterministic model in the following way:

$$
  \tokenizer_{unc}(\texts) =
  \begin{cases}
    \texts\vdot \textnormal{\texttt{1}},& \text{if \texts describes a valid}\\
    &\text{Turing Machine followed by}\\
    &\text{an input for which it halts.}\\
    \texts\vdot \textnormal{\texttt{0}},& \text{otherwise.}
   \end{cases}
   \qquad
   \cotokenizer_{unc}(\tokseq) =
   \begin{cases}
     \varepsilon,& \text{if $\tokseq \in \vocabulary$.}\\
     \texts,& \text{otherwise, where $\tokseq = \texts\vdot \token$.}
    \end{cases}
$$

Significantly, $\tokmodel_{unc}$ is not only well-defined but also exact and therefore consistent for any language model \ptext over \kleene{\alphabet}. However, $\tokenizer_{unc}$ is famously an uncomputable function, and hence $\tokmodel_{unc}$ is an uncomputable tokenizer.
\end{example}

\subsection{Definitions}

\begin{definition}\label{df:sequential_trans}
  A \defn{(left) sequential transducer} is a 6-tuple $(Q, \abstractalphabet, \abstractalphabettwo, i, \diamond, \ast)$, where:
  \begin{tabular}{l l}
    $Q$ & is a set of states\\
    \abstractalphabet & is an input alphabet\\
    $\abstractalphabettwo$ & is an output alphabet\\
    $i \in Q$ & is an initial state\\
    $\diamond\colon D\subseteq Q \times \abstractalphabet \to Q$ & is an input or ``next state'' function\\
    $\ast\colon D\subseteq Q \times \abstractalphabet \to \abstractStringstwo$ & is an output function.
  \end{tabular}
\end{definition}

\begin{definition}\label{df:sequential_f}
  A function $f\colon \abstractStrings \to \abstractStringstwo$ is \defn{sequential} if it is realized by a sequential transducer, i.e., $f(\abstractstring)= (i\ast\abstractstring)$.
\end{definition}

\begin{definition}\label{df:subsequential_trans}
  A \defn{(left) subsequential transducer} is a 7-tuple $(Q, \abstractalphabet, \abstractalphabettwo, i, \diamond, \ast, \rho)$, where:
  \begin{tabular}{l l}
    $(Q, \abstractalphabet, \abstractalphabettwo, i, \diamond, \ast)$ & is a sequential transducer\\
    $\rho\colon Q \to \abstractStringstwo$ & is a terminal function.
  \end{tabular}
\end{definition}

\begin{definition}\label{df:subsequential_f}
  A function $f\colon \abstractStrings \to \abstractStringstwo$ is \defn{subsequential} if it is realized by a subsequential transducer $(Q, \abstractalphabet, \abstractalphabettwo, i, \diamond, \ast, \rho)$, i.e., $f(\abstractstring)= (i\ast\abstractstring)\cdot\rho(i\diamond \abstractstring)$.
\end{definition}

\end{document}